\documentclass[11pt]{article}

\usepackage[utf8]{inputenc}
\usepackage[T1]{fontenc}
\usepackage{lmodern}
\usepackage[margin=1in]{geometry}
\usepackage{amsmath,amssymb,amsthm,mathtools}
\usepackage{booktabs,multirow,array}
\usepackage{graphicx}
\usepackage{subcaption}
\usepackage{xcolor}
\usepackage[numbers,sort&compress]{natbib}
\usepackage[colorlinks=true,linkcolor=blue!60!black,citecolor=blue!60!black,urlcolor=blue!60!black]{hyperref}
\usepackage[capitalise,noabbrev]{cleveref}
\usepackage{enumitem}
\usepackage{microtype}
\usepackage[section]{placeins}
\graphicspath{{figures/}}

\newtheorem{proposition}{Proposition}

\theoremstyle{definition}

\theoremstyle{remark}

\newcommand{\ngram}{\textsc{N-gram}}
\newcommand{\shared}{\textsc{Shared}}
\newcommand{\merged}{\textsc{Merged}}

\title{Representation Drift in Continual Learning:\\
LoRA versus N-gram Memory}

\author{
  Hao Wu \\
  Illinois Institute of Technology \\
  \texttt{hwu71@hawk.illinoistech.edu}
}

\date{}

\begin{document}
\maketitle

\begin{abstract}
A language model that keeps learning online must not slowly erode its own base capabilities. We ask which part of a modern LLM should absorb continual updates, by measuring how each choice changes the model's internal representations. On a 48-layer mixture-of-experts model with a native pretrained N-gram memory, we train a seven-task chain under three update mechanisms with identical data and budgets: updating only the N-gram memory table, continually training one shared LoRA adapter, and training a fresh LoRA per task that is merged into the backbone. We capture fixed-context representations at every layer and task boundary. Updating only the N-gram memory changes the base model's representations of general math and code inputs by 26\%/15\% (relative RMS), against 64\%/35\% for merged and 97\%/39\% for shared LoRA. This drift does not grow with the number of tasks: it reaches 25\% after the first task and stays there for all seven, while both LoRA variants keep accumulating. Its generic behaviour is essentially unchanged (no code problem lost, math NLL $-0.01$), and every change is confined to inspectable, reversible table rows. The cost appears elsewhere: N-gram memory acquires new tasks more weakly and its stored tasks interfere with one another, losing as much old-task NLL as shared LoRA. We show that this forgetting is governed by drift on the learned tasks' own inputs, measured from the checkpoint at which each was learned, not by generic drift (Spearman $\rho=0.85$; $\rho=0.71$ on Llama-3-8B), and we derive the exact relations behind it: with a frozen head, every pairwise logit margin changes by the projection of drift onto a head-difference direction. These results support a design principle for long-running online learning: write new knowledge only into the memory table and keep the backbone fixed; the remaining problem, interference between stored tasks, then lies inside the memory.
\end{abstract}

\section{Introduction}
\label{sec:intro}

A deployed language model that learns continually faces a constraint that one-off fine-tuning does not: updates never stop. Any perturbation of the shared backbone that accumulates with the number of updates will, over a long enough horizon, erode the general capabilities the model was deployed for. The practical question is therefore not only how well a model learns each new task, but \emph{where} in the model new knowledge should be written so that the base stays intact as updates pile up.

Modern LLMs offer more than one place to write. Besides low-rank adapters on the backbone weights (LoRA~\citep{hu2022lora}), recent production models carry large native memories, such as hashed N-gram embedding tables pretrained jointly with the backbone~\citep{qwen38flashnext}. These are read through deterministic token addresses and touched sparsely. We compare three write channels on the same model, data and budget:
\begin{itemize}[leftmargin=*,itemsep=2pt]
  \item \textbf{\ngram{} memory}: only the rows of the native N-gram table that the current task addresses are trained; every Transformer weight, the router, the memory read-out and the LM head are frozen.
  \item \textbf{\shared{} LoRA}: one LoRA adapter covering attention, linear attention and every routed and shared expert is trained continually across tasks.
  \item \textbf{\merged{} LoRA}: a fresh adapter of the same shape is trained per task and merged into the backbone before the next task.
\end{itemize}

We judge them by what they do to the model's representations. At every task boundary of a seven-task chain we capture representations for identical fixed contexts at all 48 layers, the LM-head input and every MLP/MoE block. We measure them against two references. Notation follows the key--value view of AlphaEdit~\citep{fang2025alphaedit}: a layer's input representation is a key, the keys of several inputs are stacked into a matrix $K$, and the layer's response is $Y=KW$. With a frozen output head, every change in logits comes from drift of the keys at the head. \emph{Big-$K$ drift} compares general math and code inputs with the pretrained base and measures erosion of the base. \emph{Small-$k_i$ drift} compares each learned task's inputs with the checkpoint at which that task had just been learned and measures what happens to acquired knowledge afterwards.

\paragraph{Findings.}
\begin{enumerate}[leftmargin=*,itemsep=2pt]
  \item \textbf{N-gram memory leaves the base nearly untouched, and the effect does not accumulate.} Its big-$K$ drift is 26\%/15\% on math/code versus 64--97\%/35--39\% for LoRA. It reaches this level after the first task and stays flat for all seven, while both LoRA variants keep growing. The ordering holds at every sampled layer and answer position. Generic NLL and generation are essentially unchanged (\cref{sec:res-bigk,sec:res-rounds,sec:res-depth}).
  \item \textbf{Its per-round changes cancel.} The realised squared drift is 15\% of the sum of per-round squared changes, against 40--56\% for LoRA. Successive table updates move the generic representation back and forth rather than away (\cref{sec:res-cancel}).
  \item \textbf{Its cost is interference among stored tasks, not damage to the base.} On learned-task inputs N-gram memory drifts as much as shared LoRA, and it acquires difficult tasks more weakly. Merged LoRA retains learned tasks best (\cref{sec:res-smallk,sec:res-accuracy}).
  \item \textbf{Forgetting follows small-$k_i$ drift, not big-$K$ drift}, on Qwen3.8 ($\rho=0.85$) and on Llama-3-8B ($\rho=0.71$) (\cref{sec:res-dissociation}).
\end{enumerate}

\paragraph{Theory.} \Cref{sec:framework} gives the relations that connect these measurements. They are a radial/directional decomposition of drift, a telescoping identity that exposes cross-round interference, a bound of retention loss by small-$k_i$ drift, and an exact margin identity: with a frozen head, a learned token is lost only when drift along a head-difference direction exceeds its acquisition margin. For N-gram memory, an input whose addressed rows were never rewritten is exactly unchanged. This is why generic drift stays small, and why interference concentrates on inputs that share rows.

\paragraph{Position.} For a model that must learn online indefinitely, the binding constraint is the base, and among the three channels only N-gram memory keeps the base fixed with a perturbation that does not grow with the number of tasks. We therefore argue that new knowledge should be written only into the memory table. The weaknesses we observe, weaker acquisition and inter-task interference in shared rows, are then problems of memory management, such as addressing, capacity and per-row update rules, rather than reasons to resume editing the backbone.

\section{Related Work}
\label{sec:related}

\paragraph{Catastrophic forgetting in LLM fine-tuning.}
Forgetting under sequential training is a long-standing problem~\citep{mccloskey1989catastrophic,kirkpatrick2017ewc}. For LLMs, continual instruction tuning and domain adaptation have been shown to degrade general capabilities~\citep{luo2023empirical}, and LoRA has been reported to ``learn less and forget less'' than full fine-tuning~\citep{biderman2024lora}. Our comparison is between two LoRA continuation strategies and a non-LoRA memory, under matched data exposure.

\paragraph{Knowledge editing and null spaces.}
ROME and MEMIT~\citep{meng2022rome,meng2023memit} treat FFN projections as key--value stores and write facts by solving constraints on $KW$. AlphaEdit~\citep{fang2025alphaedit} projects updates onto the null space of the keys to be preserved, $K_0$, so that $K_0\,\Delta W=0$. Our big-$K$/small-$k$ notation comes directly from this view. The difference is that a null-space constraint protects against weight change at fixed keys, whereas under continual training the keys themselves drift (\cref{eq:kw-split}); that second term is what we measure.

\paragraph{Orthogonality and subspace methods.}
A family of methods constrains successive updates to be orthogonal to previously used subspaces, including orthogonal gradient descent~\citep{farajtabar2020ogd}, O-LoRA~\citep{wang2023olora} and InfLoRA~\citep{liang2024inflora}. These methods implicitly use parameter- or gradient-space geometry as a proxy for forgetting. Our margin identity (\cref{prop:margin}) makes explicit the output-space quantity that such constraints are ultimately trying to protect.

\paragraph{Representation-level analyses of forgetting.}
\citet{ramasesh2021anatomy} localise forgetting to higher layers through representation similarity; \citet{davari2022probing} show that representation forgetting measured by linear probes can be much smaller than accuracy loss suggests. CKA~\citep{kornblith2019cka} is the standard similarity measure; we report it only as an auxiliary diagnostic because it is invariant to isotropic scaling and orthogonal transforms, both of which change logits under a frozen head.

\paragraph{Memory-augmented and sparse-update continual learning.}
Product-key and memory layers~\citep{lample2019pkm,berges2024memory} provide large, sparsely addressed parameter stores; \citet{lin2025sparsememory} argue that sparse memory fine-tuning forgets less than LoRA or full fine-tuning. Hashed N-gram memories pretrained jointly with the backbone now appear in production LLMs, including the model we study~\citep{qwen38flashnext}, whose report motivates deterministic addressing partly by cheap offloading. We study such a memory as a continual-update channel. Its frozen backbone does not keep learned-task representations fixed, and low generic drift does not imply better retention.

\paragraph{Margins and robustness.}
Logit margins are classical in generalisation and robustness analysis~\citep{bartlett2017spectrally}. We use the exact relation between margin change and drift projected onto head-difference vectors as a bookkeeping device for forgetting, closely related to how certified-robustness arguments bound prediction changes by input perturbation norms.

\paragraph{Merging and update placement.}
Merging task-specific weight increments into a shared model is studied as task arithmetic~\citep{ilharco2023task}; our merged-LoRA arm applies it sequentially, one task at a time. Forgetting has also been shown to depend on how an update is obtained, not only on its size~\citep{shenfeld2025rlrazor}. We vary where the update is placed (a sparse memory table, one continually trained adapter, or a sequence of merged adapters) under identical data, and measure where in the network the resulting drift appears.

\section{Representation Drift and How It Enters Forgetting}
\label{sec:framework}

Notation follows our experiment reports. $F_t$ is the checkpoint after task $t$ ($F_0$ is the pretrained base). For a fixed input context $x$ and fixed token position, $h_t(x)$ is the hidden state entering the frozen LM head $U$, so that
\begin{equation}
z_t(x)=Uh_t(x)+b .
\end{equation}
All three update mechanisms studied here leave $U$ and $b$ untouched, so every change in logits is a change in $h$.

\subsection{Keys, responses, and two sources of change}
\label{sec:keys}

Our $K$ and $k$ follow the key--value view of knowledge editing~\citep{meng2022rome,fang2025alphaedit}. A linear layer is read as a map from keys to values, and the layer's input representation is the key. Stacking the keys of several inputs as rows gives a matrix $K$, and the layer's response to them is
\begin{equation}
\label{eq:kw}
Y=KW .
\end{equation}
\emph{Big $K$} denotes the keys of existing knowledge that should be preserved, the $K_0$ of AlphaEdit. Complete world knowledge cannot be observed, so we use fixed generic math and code probes as its proxy. \emph{Small $k$} denotes the keys of newly written knowledge. Once task $i$ has been learned, its keys $k_i$ also become old keys to be preserved.

For the same inputs, the change of response between rounds $i$ and $t$ splits exactly as
\begin{equation}
\label{eq:kw-split}
Y_t-Y_i=\underbrace{K_i\,(W_t-W_i)}_{\text{weight change at fixed keys}}+\underbrace{(K_t-K_i)\,W_t}_{\text{key drift through new weights}} .
\end{equation}
AlphaEdit's null-space constraint $K_0\,\Delta W=0$ controls the first term. The second term comes from the keys themselves. Every layer's input is that layer's key, so any upstream parameter update reaches downstream layers as key drift, even if their weights never change.

The three update mechanisms occupy different places in \cref{eq:kw-split}. Both LoRA variants have both terms at every adapted layer. N-gram memory changes no weight outside the table, so the first term vanishes at every layer and only key drift remains, originating at the table injection in the second layer. The LM head is frozen in all three. At the head the key is $h$ and the response is the logits, so the first term is identically zero and
\begin{equation}
\label{eq:head-keys}
Z_t-Z_i=(K_t-K_i)\,U^\top :
\end{equation}
every output change comes from key drift. What this paper calls \emph{representation drift} is the key drift $K_t-K_i$ at each layer, and in particular at the head. We first define how to measure it and then show how it enters loss and predictions exactly.

\subsection{Two references: big $K$ and small $k_i$}

Task $i$ has a fixed probe set $S_i$. For $t>i$ define the drift of an old-task input and its pooled magnitude,
\begin{equation}
\label{eq:Dit}
e_i^t(x)=h_t(x)-h_i(x),\qquad
D_i^t=\left(\frac1{|S_i|}\sum_{x\in S_i}\|e_i^t(x)\|_2^2\right)^{1/2}.
\end{equation}
This is \emph{small-$k_i$ drift}: task-$i$ inputs, measured from the checkpoint at which task $i$ was just learned. \emph{Big-$K$ drift} replaces the reference by $F_0$ and the inputs by a fixed generic probe set (math, code) that no task trains on. Neither the change $F_0\to F_i$ on $S_i$ (that is learning) nor big-$K$ drift is a measurement of forgetting of task $i$. In tables we report the scale-free pooled ratio
\begin{equation}
\label{eq:relrms}
r_F=\frac{\|K_{\mathrm{after}}-K_{\mathrm{before}}\|_F}{\|K_{\mathrm{before}}\|_F}
=\left(\frac{\sum_x\|h_{\mathrm{after}}(x)-h_{\mathrm{before}}(x)\|^2}{\sum_x\|h_{\mathrm{before}}(x)\|^2}\right)^{1/2},
\end{equation}
where the rows of $K$ are the keys of these inputs at that site. $r_F$ is an energy-weighted average of per-query relative drifts, not their arithmetic mean, and a percentage of representation change, not of lost knowledge.

\paragraph{Length and direction.} For a fixed query with old representation $k$ and new representation $k'$, let $\rho=\|k'\|/\|k\|$ and $c=\cos(k,k')$. Then exactly
\begin{equation}
\label{eq:radial}
\frac{\|k'-k\|^2}{\|k\|^2}=(\rho-1)^2+2\rho(1-c).
\end{equation}
The first term is radial (length) change, the second directional change. Pooled decompositions weight by old energy; per-query percentages must not be averaged first and then substituted. Linear CKA~\citep{kornblith2019cka} is reported only as an auxiliary statistic: it is invariant to isotropic scaling and orthogonal rotation, both of which change what a fixed head reads.

\subsection{Accumulation across tasks}

Adjacent-round drift on the same input is $v_{i,j}(x)=h_j(x)-h_{j-1}(x)$. On a fixed input the telescoping identity is exact:
\begin{equation}
\label{eq:telescope}
e_i^t(x)=\sum_{j=i+1}^{t}v_{i,j}(x),\qquad
\|e_i^t\|^2=\sum_j\|v_{i,j}\|^2+2\sum_{a<b}\langle v_{i,a},v_{i,b}\rangle .
\end{equation}
The inner products are the inter-task interference terms: aligned successive updates make drift grow linearly in the number of tasks, mutually orthogonal ones only as its square root, and opposing ones cancel so that the endpoint drift is smaller than the sum of per-round drifts. For a probe set stacked as $K^{(j)}$ with $v_j=K^{(j)}-K^{(j-1)}$, normalise by the base energy,
\begin{equation}
\label{eq:R}
D_t=\frac{\|K^{(t)}-K^{(0)}\|_F^2}{\|K^{(0)}\|_F^2},\qquad
E_j=\frac{\|v_j\|_F^2}{\|K^{(0)}\|_F^2},\qquad
R_t=\frac{D_t}{\sum_{j=1}^tE_j}.
\end{equation}
$R_t<1$ iff the cross-round inner products sum to a negative number (net cancellation), $R_t>1$ iff they sum to a positive number; $R_t=1$ does not imply pairwise orthogonality. Archived adjacent drifts are normalised by $\|K^{(j-1)}\|_F$, so $E_j$ is recovered by first multiplying by the pooled norm ratio $q_{j-1}=\|K^{(j-1)}\|_F/\|K^{(0)}\|_F$ and then squaring. Moreover $D_j-D_{j-1}-E_j$ equals twice the inner product of round $j$'s change with the accumulated change, so dividing by $2\sqrt{D_{j-1}E_j}$ gives their overall cosine.

\subsection{From drift to loss and to argmax}

With a frozen head the logit change at the head input is exact,
\begin{equation}
\label{eq:dz}
\delta z_t(x)=z_t(x)-z_i(x)=Ue_i^t(x).
\end{equation}
For a teacher-forced answer token $y$, since $\nabla_z\ell=p-e_y$ has norm at most $\sqrt2$, the cross-entropy obeys the strict bound
\begin{equation}
\label{eq:ce-bound}
|\ell(z_t,y)-\ell(z_i,y)|\le\sqrt2\,\|U\|_2\,\|e_i^t\|_2 ,
\end{equation}
and averaging over the task set (Jensen) gives
\begin{equation}
\label{eq:dnll-bound}
|\Delta\mathrm{NLL}_i^t|\le\sqrt2\,\|U\|_2\,D_i^t .
\end{equation}
This bound connects small-$k_i$ drift directly to retention loss. Big-$K$ drift cannot replace $D_i^t$ in it unless one additionally assumes that old-task queries are distributed like the generic probes.

The bound ignores direction. The second-order expansion makes this explicit:
\begin{equation}
\label{eq:second-order}
\Delta\ell=g_i^\top e_i^t+\tfrac12(e_i^t)^\top U^\top\Lambda_iUe_i^t+O(\|e_i^t\|^3),
\qquad g_i=U^\top(p_i-e_y),\quad \Lambda_i=\operatorname{diag}(p_i)-p_ip_i^\top .
\end{equation}
Drift that falls in the approximate null space of $g_i$ leaves NLL unchanged to first order; small drift along a sensitive head direction can cause large forgetting.

For greedy correctness, let the acquisition token margin be $m_i(x)=z_{i,y}-\max_{c\ne y}z_{i,c}$. Because every pairwise logit gap changes by at most $\|u_y-u_c\|\,\|e\|\le2\|U\|_{2\to\infty}\|e\|$,
\begin{equation}
\label{eq:argmax-cert}
2\|U\|_{2\to\infty}\,\|e_i^t(x)\|_2<m_i(x)\ \Longrightarrow\ \text{the greedy prediction of this token is unchanged.}
\end{equation}
If every teacher-forced answer token satisfies \cref{eq:argmax-cert}, exact match of the whole answer is preserved. This is conservative and sufficient only; generation accuracy must be measured separately and cannot be inferred from NLL.

\subsection{Margin budget along a fixed decision boundary}

\Cref{eq:argmax-cert} uses only $\|e\|$. The exact statement keeps direction. For reference token $y$ and competitor $c\ne y$ with head rows $u_y,u_c$, define
\begin{equation}
a_{yc}=u_y-u_c,\qquad m_{i,c}=z_{i,y}-z_{i,c},\qquad
d_{i,c}=\frac{m_{i,c}}{\|a_{yc}\|_2},\qquad
H_{i,c}^t=-\frac{a_{yc}^\top e_i^t}{\|a_{yc}\|_2}.
\end{equation}

\begin{proposition}[Margin identity]
\label{prop:margin}
With a frozen affine head, for every fixed competitor $c$,
\begin{equation}
\label{eq:margin}
m_{t,c}=m_{i,c}+a_{yc}^\top e_i^t=m_{i,c}-\|a_{yc}\|_2H_{i,c}^t .
\end{equation}
Hence $H_{i,c}^t=d_{i,c}$ means $y$ and $c$ tie, and $H_{i,c}^t>d_{i,c}$ means $c$ strictly overtakes $y$.
\end{proposition}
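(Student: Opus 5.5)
The identity follows directly from the affine form of the frozen head, so the plan is a short direct computation followed by a case analysis on the sign of the new margin. First, write each logit coordinate as $z_{t,y}(x)=u_y^\top h_t(x)+b_y$. Here $u_y$ is the $y$-th row of $U$, and the same $U$ and $b$ apply at checkpoints $F_i$ and $F_t$, because all three mechanisms leave the head untouched. Subtracting the $c$-th coordinate gives
\begin{equation*}
m_{t,c}=a_{yc}^\top h_t(x)+(b_y-b_c),
\end{equation*}
and the analogous expression holds for $m_{i,c}$ with $h_i(x)$. The bias difference is the same at both checkpoints.

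Next, subtract the two expressions. The bias terms cancel, leaving $m_{t,c}-m_{i,c}=a_{yc}^\top(h_t(x)-h_i(x))=a_{yc}^\top e_i^t$ by the definition of $e_i^t$ in \cref{eq:Dit}. This is the coordinate form of \cref{eq:dz}, namely $\delta z=Ue_i^t$, read along the direction $e_y-e_c$. The second equality in \cref{eq:margin} then follows by substituting the definition $H_{i,c}^t=-a_{yc}^\top e_i^t/\|a_{yc}\|_2$, which requires $a_{yc}\neq0$.

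For the consequences, divide \cref{eq:margin} by $\|a_{yc}\|_2>0$. This gives $m_{t,c}/\|a_{yc}\|_2=d_{i,c}-H_{i,c}^t$. The new margin is zero, so $z_{t,y}=z_{t,c}$ and the two tokens tie, exactly when $H_{i,c}^t=d_{i,c}$. The new margin is negative, so $z_{t,c}>z_{t,y}$, exactly when $H_{i,c}^t>d_{i,c}$. Both directions of each equivalence hold because division by a positive scalar preserves order.

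There is essentially no obstacle here. The only point that needs care is the degenerate case $u_y=u_c$, where $d_{i,c}$ and $H_{i,c}^t$ are undefined. In that case I would note that the first equality still holds and that the margin is invariant, $m_{t,c}=m_{i,c}$, so the normalised statements are only asserted for $a_{yc}\neq0$. The proposition also makes no claim that $y$ is the argmax at $F_i$; I would stress that it is a pairwise statement valid for every fixed $c$.
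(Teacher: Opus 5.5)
Your proposal is correct and matches the paper's proof: the paper also writes $m_{t,c}=a_{yc}^\top h_t+(b_y-b_c)$, substitutes $h_t=h_i+e_i^t$ to get the identity, and obtains the tie and overtake consequences by dividing by $\|a_{yc}\|_2>0$. Your explicit handling of the degenerate case $a_{yc}=0$, where $d_{i,c}$ and $H_{i,c}^t$ are undefined but the margin is simply unchanged, is a useful clarification that the paper leaves implicit.
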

\begin{proof}
$m_{t,c}=a_{yc}^\top h_t+(b_y-b_c)=a_{yc}^\top(h_i+e_i^t)+(b_y-b_c)=m_{i,c}+a_{yc}^\top e_i^t$.
\end{proof}

This involves no linearisation; $H/d$ and the margin change are the same quantity, not a prediction and its validation. When the reference token strictly leads the full vocabulary at $F_i$ (all $m_{i,c}>0$, outside numerical tolerance), $d_i=\min_{c\ne y,\,a_{yc}\ne0}d_{i,c}$ is a safe radius: $\|e_i^t\|<d_i$ preserves the argmax (Cauchy--Schwarz), which sharpens \cref{eq:argmax-cert}. Positions where $y$ already trails or ties at $F_i$ receive no safe radius and are counted separately. The geometrically nearest competitor
\begin{equation}
c_{\mathrm{geom}}=\arg\min_{c\ne y,\ \|a_{yc}\|_2>0}\frac{m_{i,c}}{\|a_{yc}\|_2}
\end{equation}
need not be the second-highest logit, and the full-vocabulary argmax at $F_t$ must be recomputed because a different token may overtake.

Along a fixed boundary the telescoping identity becomes exact per round:
\begin{equation}
\label{eq:H-telescope}
H_{i,c}^t=\sum_{j=i+1}^tH_{i,j,c},\qquad H_{i,j,c}=-n_{yc}^\top(h_j-h_{j-1}),\qquad n_{yc}=\frac{a_{yc}}{\|a_{yc}\|_2}.
\end{equation}
We summarise sign coherence by $H_{\mathrm{net}}=\sum_jH_{i,j,c}$, $H_{\mathrm{gross}}=\sum_j|H_{i,j,c}|$ and the cancellation coefficient $\kappa=1-|H_{\mathrm{net}}|/H_{\mathrm{gross}}$ (for $H_{\mathrm{gross}}>0$). A positive projection consumes margin and a negative one adds to it.

\paragraph{Sequences.} At answer positions $s$ where the reference token strictly leads at $F_i$, let $\rho_{i,s}^t=\|e_{i,s}^t\|_2/d_{i,s}$ and $\rho_{i,\mathrm{seq}}^t=\max_s\rho_{i,s}^t$. If the greedy output at $F_i$ equals the reference sequence with every step strictly leading, and decoding at $F_t$ uses the same prompt, head, greedy rule and termination, then $\rho_{i,s}^t<1$ at every position (including the end token) suffices to preserve the whole sequence. The true \emph{first flip} is where the greedy trajectory first departs; after it, model change and prefix change are mixed. Initial fragility and later erosion are separated by reporting the distribution of acquisition radii $d_{i,s}$ together with the normalised budget
\begin{equation}
F_i^{t,*}=\max_{s,c}\frac{H_{i,s,c}^t}{d_{i,s,c}} ,
\end{equation}
where $F_i^{t,*}\ge1$ means some boundary is reached under the reference prefix.

\subsection{Where drift comes from: layers and update structure}

\paragraph{Layer recursion.} For the same full token sequence, let $F_\ell,F'_\ell$ be block $\ell$ before and after an update and $E_\ell$ the representation change entering it. Exactly,
\begin{equation}
\label{eq:layer}
E_{\ell+1}=S_\ell+P_\ell,\qquad S_\ell=F'_\ell(K_\ell)-F_\ell(K_\ell),\qquad P_\ell=F'_\ell(K_\ell+E_\ell)-F'_\ell(K_\ell),
\end{equation}
with $\|E_{\ell+1}\|_F^2=\|S_\ell\|_F^2+\|P_\ell\|_F^2+2\langle S_\ell,P_\ell\rangle_F$. Locally $E_{\ell+1}\approx J_\ell E_\ell+S_\ell$: upstream drift propagates through the block Jacobian and the block's own parameter change adds to it. Layer curves can therefore amplify, attenuate or cancel, and the difference between adjacent layer curves is not that layer's causal contribution. The recursion separates where drift is \emph{created} ($S_\ell$, the source at layer $\ell$) from how it is \emph{carried} ($P_\ell$), and gives an accumulation bound.

\begin{proposition}[Cross-layer accumulation]
\label{prop:layers}
Let $L_j$ be a Lipschitz constant of the updated block $F'_j$ on the segment between $K_j$ and $K_j+E_j$, and let the input to the first block be unchanged ($E_1=0$). Then
\begin{equation}
\label{eq:layer-bound}
\|E_{L+1}\|_F\ \le\ \sum_{\ell=1}^{L}\Bigl(\prod_{j=\ell+1}^{L}L_j\Bigr)\,\|S_\ell\|_F .
\end{equation}
\end{proposition}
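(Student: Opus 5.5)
The plan is a one-step recursive inequality from the exact layer recursion \cref{eq:layer}, unrolled by induction on $\ell$. Apply the triangle inequality to $E_{\ell+1}=S_\ell+P_\ell$ to get $\|E_{\ell+1}\|_F\le\|S_\ell\|_F+\|P_\ell\|_F$. Then bound the propagation term by $P_\ell=F'_\ell(K_\ell+E_\ell)-F'_\ell(K_\ell)$. By hypothesis $L_\ell$ is a Lipschitz constant of $F'_\ell$ on the segment joining $K_\ell$ and $K_\ell+E_\ell$, so $\|P_\ell\|_F\le L_\ell\|E_\ell\|_F$. This gives the scalar recursion
\begin{equation*}
\|E_{\ell+1}\|_F\le L_\ell\|E_\ell\|_F+\|S_\ell\|_F ,\qquad \|E_1\|_F=0 .
\end{equation*}

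The second step is to unroll this recursion. The claim to prove by induction is
\begin{equation*}
\|E_{m+1}\|_F\le\sum_{\ell=1}^{m}\Bigl(\prod_{j=\ell+1}^{m}L_j\Bigr)\|S_\ell\|_F
\end{equation*}
for every $m$, with the empty product equal to $1$.
\begin{itemize}
\item \emph{Base case} ($m=1$). Since $E_1=0$, we have $P_1=F'_1(K_1)-F'_1(K_1)=0$, so $\|E_2\|_F=\|S_1\|_F$.
\item \emph{Inductive step} ($m\to m+1$). Multiply the hypothesis by $L_{m+1}\ge0$; this extends every product $\prod_{j=\ell+1}^{m}L_j$ to $\prod_{j=\ell+1}^{m+1}L_j$. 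Adding $\|S_{m+1}\|_F$ then supplies the new last term, whose coefficient is the empty product.
\end{itemize}
Taking $m=L$ gives \cref{eq:layer-bound}.

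The only point needing care is that the Lipschitz property is applied to the right quantity. $K_\ell$ and $E_\ell$ are stacked matrices over the probe set, whereas blocks act token-sequence-wise and not row-independently, because attention mixes positions. I would therefore read $F'_\ell$ as a map on the whole stacked representation of the fixed sequences and state $L_\ell$ with respect to the Frobenius norm on that space. The hypothesis already restricts $L_\ell$ to the segment, for example $\sup_{\tau\in[0,1]}\|J_{F'_\ell}(K_\ell+\tau E_\ell)\|_{F\to F}$ via the mean value inequality, so no global Lipschitz assumption on attention or routing is needed.

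Under this reading the step $\|P_\ell\|_F\le L_\ell\|E_\ell\|_F$ is exactly the definition of $L_\ell$. I expect this to be the main point to state carefully rather than a genuine obstacle. The rest is routine bookkeeping, and the bound is loose precisely because it drops the cross term $2\langle S_\ell,P_\ell\rangle_F$.
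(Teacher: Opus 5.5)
Your proof is correct and follows the paper's own argument: the triangle inequality on $E_{\ell+1}=S_\ell+P_\ell$, then the segment-Lipschitz bound $\|P_\ell\|_F\le L_\ell\|E_\ell\|_F$, then unrolling the scalar recursion from $E_1=0$. Your explicit induction, and your remark that $F'_\ell$ should be read as a map on the whole stacked sequence representation under the Frobenius norm, make precise what the paper's one-line proof leaves implicit.
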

\begin{proof}
From \cref{eq:layer}, $\|E_{\ell+1}\|_F\le\|S_\ell\|_F+\|P_\ell\|_F\le\|S_\ell\|_F+L_\ell\|E_\ell\|_F$; iterate from $E_1=0$.
\end{proof}

Drift at the output is thus a sum over sources, each amplified by the blocks after it. The bound is conservative: directions and cancellation between sources matter, and depth alone does not determine forgetting. Its value here is structural, because the three update mechanisms have very different sets of sources.
\begin{itemize}[leftmargin=*,itemsep=2pt]
  \item \textbf{LoRA}: every adapted block changes, so $S_\ell\neq0$ at every depth and drift is created throughout the network.
  \item \textbf{\ngram{} memory}: every block except the table injection is frozen, so $S_\ell=0$ for all $\ell$ but one. The whole right-hand side reduces to a single term, the injection change $\Delta J(x)$ of \cref{eq:ngram} carried through the frozen suffix. In particular, if $\Delta J(x)=0$ for an input, its representation is unchanged at every depth. This is the memory-table analogue of the layer-wise induction used for null-space continual learning~\citep{wang2021adamnscl}.
\end{itemize}
How large that single source is depends on the input: small for generic text, which rarely reads the rows a task writes, and large for old-task inputs, which share rows with later tasks. \Cref{sec:res-smallk} shows both in the layer profile.

If round $j$ injects a direct perturbation $s_{\ell,j}^{(i)}$ at layer $\ell$ on old-task inputs, then to first order
\begin{equation}
\label{eq:jacobian-sum}
e_i^t\approx\sum_{j=i+1}^{t}\sum_\ell J_{L\leftarrow\ell+1}^{(t)}s_{\ell,j}^{(i)}+\text{higher-order terms}.
\end{equation}
Unlike \cref{eq:radial,eq:telescope,eq:margin}, this is an approximation, and any attribution built on it must first report its reconstruction residual $\|v-\hat v\|/\|v\|$.

For a block captured at its input and aggregate output, the endpoint change splits exactly as
\begin{equation}
\label{eq:mlp-split}
F_\ell^{(7)}(k_\ell^{(7)})-F_\ell^{(0)}(k_\ell^{(0)})
=\underbrace{F_\ell^{(0)}(k_\ell^{(7)})-F_\ell^{(0)}(k_\ell^{(0)})}_{\text{input change, old map}}
+\underbrace{F_\ell^{(7)}(k_\ell^{(7)})-F_\ell^{(0)}(k_\ell^{(7)})}_{\text{map change, new input}} .
\end{equation}
Our captures contain the left-hand side and the input change but not the cross-forward $F_\ell^{(0)}(k_\ell^{(7)})$, so the two terms are not separately identified.

\paragraph{Shared LoRA.} If the same factors $C,D$ keep training, the round-$j$ weight change is exactly
\begin{equation}
\label{eq:shared-dW}
\Delta W_{\ell,j}=\delta C_{\ell,j}D_{\ell,j-1}+C_{\ell,j-1}\delta D_{\ell,j}+\delta C_{\ell,j}\delta D_{\ell,j},
\end{equation}
decoder overwrite, re-encoding of old codes, and coupling. The first two terms are first order in the current change because the old factors are given, not small, and they enter $s_{\ell,j}^{(i)}=\Delta W_{\ell,j}K_\ell^{(i)}$ directly. The chain $\|\delta D\|\to\|\delta Dk_i\|\to\|C\delta Dk_i\|\to\|JC\delta Dk_i\|\to-n^\top JC\delta Dk_i$ separates code overlap and functional change from signed boundary harm; only the last is forgetting-relevant.

\paragraph{Fresh/merged LoRA.} The old factors are frozen inside the merged weights and each round adds $\Delta W_{\ell,j}=C_{\ell,j}D_{\ell,j}$. There is no direct overwrite of old factors, but the new adapter still responds to old inputs, and cross-layer Jacobians and higher-order products remain. The two arms also differ in expressible capacity: the cumulative update has rank at most $r$ per module for shared LoRA and at most $tr$ for merged LoRA. For fixed layer inputs $K_1,K_2$ and target increments $Z_1,Z_2$, any cumulative update $C$ of rank at most $r$ that fits the new task to $\|CK_2-Z_2\|_F^2\le\varepsilon^2$ must incur
\begin{equation}
\label{eq:capacity}
\|CK_1-Z_1\|_F^2\ \ge\ \Bigl[\sum_{j>r}\sigma_j([Z_1\ Z_2])^2-\varepsilon^2\Bigr]_+ ,
\end{equation}
by the Eckart--Young bound on $\operatorname{rank}(C[K_1\ K_2])\le r$. Increasing cumulative rank removes this obstruction but does not guarantee that optimisation finds a low-interference solution.

\paragraph{Rank does not set the disturbance; direction does.}
Write a weight change in the row convention of \cref{eq:kw} through its SVD, $D=\sum_j\sigma_j v_ju_j^\top$, with $v_j$ input-side and $u_j$ output-side unit vectors. Its effect on old keys $K$, the first term of \cref{eq:kw-split}, is exactly
\begin{equation}
\label{eq:dk-spectrum}
\|KD\|_F^2=\sum_j\sigma_j^2\,\|Kv_j\|_2^2 ,
\end{equation}
because $KD=\sum_j\sigma_j(Kv_j)u_j^\top$ and the $u_j$ are orthonormal. The disturbance is set by the size of each update direction and by how strongly the old keys project onto it, not by the number of directions. A low-rank update that points at old keys can be very harmful; a high-rank update orthogonal to them is harmless. It vanishes exactly when $Kv_j=0$ for every direction with $\sigma_j>0$, which is the null-space condition of AlphaEdit.

\paragraph{When task updates must overlap.}
Let the update written by task $j$ be $\operatorname{vec}(\Delta W_j)=\alpha_ju_j$ with $\|u_j\|_2=1$, for $m$ tasks, and let $q=\operatorname{rank}[u_1,\dots,u_m]$ be the dimension they jointly span. Then
\begin{equation}
\label{eq:welch}
\frac1{m(m-1)}\sum_{a\ne b}(u_a^\top u_b)^2\ \ge\ \max\Bigl(0,\ \frac{m-q}{q(m-1)}\Bigr).
\end{equation}
This is Welch's bound~\citep{welch1974bounds}. The Gram matrix $G=[u_a^\top u_b]$ has trace $m$ and rank at most $q$, so $\|G\|_F^2\ge m^2/q$, and the off-diagonal part is $\|G\|_F^2-m$. Once the number of tasks exceeds the dimension available to their updates, squared overlap between task directions is unavoidable. For $m\le q$ the bound is zero: orthogonal updates are geometrically possible, though optimisation need not find them. This is the structural difference between the two LoRA arms. \shared{} routes every task through one factor pair, so successive increments reuse the same row and column spaces (\cref{eq:shared-dW}); \merged{} gives every task fresh factors, so the jointly spanned dimension can keep growing. Overlap in parameter space is not yet forgetting. It matters only through \cref{eq:dk-spectrum}, by landing on directions that old keys use, and then through \cref{eq:margin}, by projecting onto the old task's head-difference directions.

\paragraph{N-gram memory.} The native table $T$ is read at a fixed early layer through addresses determined by hashed token bigrams and trigrams of the raw input; the backbone, gates and projections are frozen. Write the read pattern of context $x$ as a fixed sparse matrix $R(x)$. Upstream of the injection nothing changes, so the injected representation changes only through the table,
\begin{equation}
\label{eq:ngram}
\Delta J(x)=G\bigl(H(x),(T+\Delta T)R(x)\bigr)-G\bigl(H(x),TR(x)\bigr),
\end{equation}
where $G$ is the frozen read-out (key/value projections, hidden-dependent gate and causal convolution) and $H(x)$ the unchanged pre-injection state. If $\Delta T R(x)=0$ on the full causal prefix of an old input, its entire forward pass is unchanged. Any read of a row written by a later task---a genuinely shared N-gram or a hash collision---breaks this, and the change then propagates through the frozen suffix, whose inputs have moved even though its parameters have not. N-gram memory can therefore have very small big-$K$ drift (generic inputs rarely read rows the tasks write) and still large small-$k_i$ drift (old-task inputs share rows with new tasks).

\section{Experimental Setup}
\label{sec:setup}

\paragraph{Backbone.} We use Qwen3.8-Flash-Next~\citep{qwen38flashnext} (revision \texttt{de4b8e4}), a 48-layer hybrid-attention (gated linear attention and full attention) MoE language model with 125B backbone parameters (about 6B active), 512 routed experts with top-10 routing plus shared experts, and four hyper-connection residual streams that are contracted before the LM head. It carries a native, jointly pretrained N-gram memory of 51.2B parameters: a table of $320{,}001{,}536$ rows of width 160, read at the second layer. Each token reads one row from each of 8 bigram and 8 trigram hash heads (16 rows, 2{,}560 values). The addresses depend only on raw token IDs (with history reset after EOS), and the rows pass through key/value projections, a hidden-state-dependent gate and a causal convolution before being injected into the residual streams. The model runs in BF16. The final normalisation is the identity, so the captured head input determines logits exactly.

\paragraph{Task chain.} Seven tasks, learned in the fixed order
SME $\to$ NAV $\to$ HYE $\to$ AFB $\to$ SQI $\to$ MKD $\to$ AMH, are morphological inflection in Northern Sami, Navajo, Armenian, Gulf Arabic, Albanian, Macedonian and Amharic from the SIGMORPHON 2023 shared task~\citep{goldman2023sigmorphon} (pinned commit; \texttt{lemma, features}$\to$\texttt{form}). Each task uses 5{,}000 training examples in a chat template with thinking disabled; the model must output only the inflected form. Train and test lemma strings are disjoint (exact string match). We chose inflection because it is genuinely new knowledge for the base model in most of these languages, admits unambiguous exact-match scoring, and yields short answers whose first token dominates correctness.

\paragraph{Update mechanisms.} All arms use identical data order and budget: 125 optimizer steps $\times$ global batch 40 $=5{,}000$ presentations per task, max length 128, Adam ($\beta=(0.9,0.95)$, $\epsilon=10^{-8}$), peak learning rate $10^{-4}$ with cosine decay to zero, no warm-up, no weight decay, gradient clipping at 1. Optimizer and scheduler are reset at every task boundary.
\begin{itemize}[leftmargin=*,itemsep=2pt]
  \item \ngram{}: only the table rows addressed by the current task's training inputs are trainable, starting from their current (pretrained or previously updated) values. Each task has between 51{,}018 (AMH) and 149{,}809 (AFB) rows, i.e.\ 8--24M scalars. All seven tasks together touch 666{,}747 rows (0.21\% of the table), and each task shares between 1{,}277 and 4{,}911 rows with earlier tasks. Hashing, gate, read-out projections, router, all Transformer weights and the head are frozen. On the touched rows, Adam is exactly equivalent to full-table Adam, because untouched rows have identically zero gradient and optimizer state. Row values accumulate across tasks.
  \item \shared{} LoRA: rank 16, $\alpha=64$, no dropout, on six projection families (\texttt{in\_proj, out\_proj, linear\_qkv, linear\_proj, linear\_fc1, linear\_fc2}) covering linear attention, full attention, and \emph{every} routed and shared expert. The same factors continue training on each new task.
  \item \merged{} LoRA: identical configuration, but a freshly initialised adapter per task that is merged into the backbone weights at the end of the task.
\end{itemize}
The arms are matched in data exposure, not in parameter count: a full rank-16 adapter bank has 2.79B parameters (80.2M on any single token's path), while a task's N-gram support is $\sim$16M scalars. We return to this in \cref{sec:limitations}.

\paragraph{Probes and captures.} At $F_0$ and after every task we run forward-only passes on fixed probes:
\begin{itemize}[leftmargin=*,itemsep=2pt]
  \item \emph{Generic}: 128 MATH-500 problems~\citep{hendrycks2021math,lightman2024verify} with official solutions and 64 HumanEval problems~\citep{chen2021humaneval} with canonical solutions, in raw (template-free) format.
  \item \emph{Task}: 64 fixed held-out probes for each task learned so far, in the training template.
\end{itemize}
We record teacher-forced token-weighted NLL of the reference answer and capture, at the last prompt token (and at 25/50/75/100\% answer positions for math/code), the output of all 48 decoder layers (four streams concatenated), the post-merge representation and the LM-head input. For the final checkpoints we additionally capture the input and aggregate output of every MLP/MoE block. Drift uses pooled relative RMS (\cref{eq:relrms}); we also report mean cosine, pooled norm ratio, and linear CKA.

\paragraph{Generative evaluation.} Checkpoints are exported to Hugging Face format: the N-gram arm with its accumulated rows written into the table, \shared{} with its current factors merged once, and \merged{} with its per-task adapters merged in order. They are then decoded greedily (\texttt{do\_sample=false}) with vLLM (TP8, BF16, N-gram table offloaded to CPU). Task accuracy is exact match on the 64 task probes. Math accuracy (the same 128 problems) uses symbolic answer verification (math-verify 0.9.0). Code accuracy (the same 64 problems) is pass@1 under WASI-sandboxed execution; the sandbox passes all 164 canonical HumanEval solutions.

\paragraph{Second model.} An earlier, separately run chain on Llama-3-8B~\citep{dubey2024llama3} uses the same seven tasks and order with a different memory arm: a zero-initialised hashed bigram/trigram lookup table ($2\times262{,}144$ buckets of width 4{,}096) added at the input of the second decoder block. Llama-3-8B has no native memory: the table is new, and because it starts at zero the model at the start of training is exactly the base, whose existing knowledge does not depend on the table at all. This differs from Qwen3.8, whose table was pretrained together with the backbone and already carries part of the model's function, so that updating its rows edits an existing function. It also uses rank-8 LoRA on all seven projection types, 625 steps $\times$ batch 8 at learning rate $2\times10^{-4}$, seed 17, and representations after the final RMSNorm. We use it only to check whether the direction of the Qwen findings replicates (\cref{app:llama}).

\paragraph{Infrastructure.} Training uses Megatron-SWIFT on $4\times$H200 with TP2/PP2/EP2. Checkpoints are verified by hash before being used as the parent of the next task. All numbers are from a single training seed (data seed 42); probe sets are fixed across methods and rounds.

\section{Representation Drift Results}
\label{sec:results}

All three arms completed the seven-round chain with identical boundary evaluations. Unless stated otherwise, numbers are at the LM-head input, last prompt token; drift is pooled relative RMS (\cref{eq:relrms}).

\subsection{Big $K$ after seven rounds}
\label{sec:res-bigk}

\begin{table}[htbp]
\centering
\scriptsize
\setlength{\tabcolsep}{3.5pt}
\caption{Big-$K$ drift $F_0\to F_7$ (head input, last prompt token). Radial and directional energies are the two terms of \cref{eq:radial}, pooled and normalised by base energy; their sum is the squared relative RMS. $\Delta$NLL is the teacher-forced reference-answer loss change relative to $F_0$.}
\label{tab:bigk}
\begin{tabular}{llrrrrrrrrrr}
\toprule
Method & Domain & Rel.\ RMS & Mean cos & Median & P95 & Norm ratio & Radial & Direct. & Dir.\ share & CKA & $\Delta$NLL \\
\midrule
\ngram  & math & \textbf{26.37\%} & 0.9658 & 13.48\% & 58.37\% & 1.0059 & 0.0020 & 0.0676 & 97.17\% & 0.627 & $-0.0119$ \\
        & code & \textbf{15.13\%} & 0.9889 & 12.49\% & 23.58\% & 1.0019 & 0.0003 & 0.0226 & 98.66\% & 0.986 & $-0.0022$ \\
\shared & math & 97.33\% & 0.4502 & 97.75\% & 105.27\% & 0.8274 & 0.0358 & 0.9116 & 96.22\% & 0.366 & $-0.1001$ \\
        & code & 38.90\% & 0.9263 & 34.76\% & 54.89\% & 1.0003 & 0.0014 & 0.1499 & 99.09\% & 0.896 & $+0.0802$ \\
\merged & math & 64.36\% & 0.7894 & 52.64\% & 113.52\% & 0.9275 & 0.0143 & 0.3998 & 96.54\% & 0.127 & $-0.0865$ \\
        & code & 34.98\% & 0.9387 & 32.58\% & 51.61\% & 0.9639 & 0.0026 & 0.1198 & 97.86\% & 0.913 & $+0.0246$ \\
\bottomrule
\end{tabular}
\end{table}

\Cref{tab:bigk} is the central representation result. On both generic domains the ordering is
\[
\ngram\ <\ \merged\ <\ \shared ,
\]
and it holds for the pooled RMS, the median and the 95th percentile of per-query drift, and the mean cosine. Three further observations:
\begin{enumerate}[leftmargin=*,itemsep=2pt]
  \item \textbf{Drift is directional.} In every arm and domain more than 96\% of squared drift is the directional term $2\rho(1-c)$ of \cref{eq:radial}. \ngram{} keeps norms almost exactly ($\rho\approx1.00$). \shared{} on math is the only case with a substantial radial component: the pooled norm shrinks by 17\%.
  \item \textbf{Drift is not NLL.} \shared{} moves math representations the most (97\%) and also lowers math reference NLL the most ($-0.100$). Code NLL rises for both LoRA arms. Representation change at a generic probe can be neutral or even helpful for the loss, as \cref{eq:second-order} allows.
  \item \textbf{CKA disagrees with direct drift.} \merged{} has the lowest math CKA (0.13) although its direct drift is below \shared's. CKA measures sample-geometry similarity up to rotation and scale and is not a substitute for drift in the head's coordinates.
\end{enumerate}

\begin{figure}[htbp]
\centering
\includegraphics[width=\linewidth]{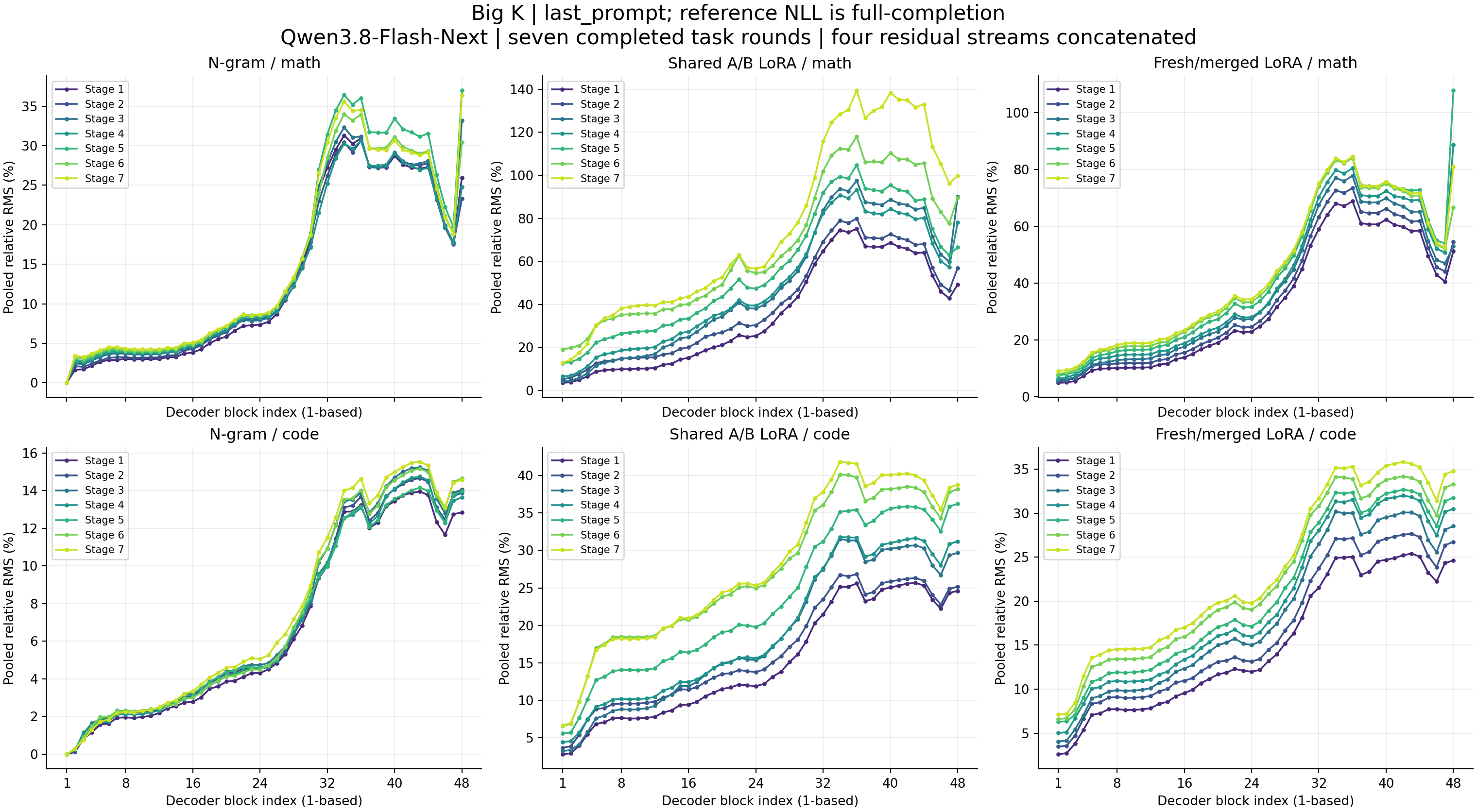}
\caption{Big-$K$ drift by layer for the seven rounds (one column per method, one row per generic domain; one curve per round). Layer curves are four residual streams concatenated (10{,}240 dimensions).}
\label{fig:bigk-layers}
\end{figure}

\subsection{Seven-round trajectory}
\label{sec:res-rounds}

\begin{table}[htbp]
\centering
\small
\caption{Big-$K$ trajectory at the head input: drift from $F_0$ after each round, and reference NLL (base: math 0.888, code 0.201).}
\label{tab:rounds}
\begin{tabular}{l r rrr rrr}
\toprule
 & & \multicolumn{3}{c}{Math} & \multicolumn{3}{c}{Code} \\
\cmidrule(lr){3-5}\cmidrule(lr){6-8}
Method & Round & Drift & Cos & NLL & Drift & Cos & NLL \\
\midrule
\ngram  & 1 & 25.21\% & 0.968 & 0.881 & 13.40\% & 0.991 & 0.200 \\
        & 3 & 24.81\% & 0.970 & 0.878 & 14.66\% & 0.990 & 0.199 \\
        & 5 & 26.14\% & 0.966 & 0.874 & 14.22\% & 0.990 & 0.199 \\
        & 7 & 26.37\% & 0.966 & 0.876 & 15.13\% & 0.989 & 0.199 \\
\shared & 1 & 45.51\% & 0.893 & 0.750 & 24.91\% & 0.969 & 0.210 \\
        & 3 & 74.71\% & 0.717 & 0.769 & 30.10\% & 0.955 & 0.217 \\
        & 5 & 66.64\% & 0.760 & 0.619 & 35.53\% & 0.937 & 0.205 \\
        & 7 & 97.33\% & 0.450 & 0.788 & 38.90\% & 0.926 & 0.281 \\
\merged & 1 & 45.01\% & 0.897 & 0.781 & 25.12\% & 0.969 & 0.209 \\
        & 3 & 52.41\% & 0.856 & 0.796 & 29.37\% & 0.958 & 0.215 \\
        & 5 & 76.22\% & 0.714 & 0.916 & 32.41\% & 0.948 & 0.224 \\
        & 7 & 64.36\% & 0.789 & 0.804 & 34.98\% & 0.939 & 0.224 \\
\bottomrule
\end{tabular}
\end{table}

\Cref{tab:rounds} (all rounds in \cref{app:rounds}) shows three different temporal profiles. \ngram{} jumps to about 25\% after the first task and then stays flat (21--26\% on math for all seven rounds). \shared{} grows almost monotonically on math (45\%$\to$97\%) and monotonically on code. \merged{} is non-monotone: math drift peaks at round 5 (76\%) and falls back to 64\% by round 7. After the first round, \shared{} and \merged{} are nearly indistinguishable (45.5\% vs 45.0\%), as they must be: the first task trains an identically configured adapter in both. The arms diverge only once \shared{} starts rewriting its factors.

\subsection{Accumulation and cancellation across rounds}
\label{sec:res-cancel}

\begin{table}[htbp]
\centering
\small
\caption{Cross-round accumulation of big-$K$ drift (\cref{eq:R}). $\sum_jE_j$ is the sum of per-round squared changes, normalised by base energy; $R_7<1$ means the cross-round inner products sum to a negative number.}
\label{tab:cancel}
\begin{tabular}{llrrr}
\toprule
Method & Domain & Final drift & $\sum_{j\le7}E_j$ & $R_7$ \\
\midrule
\ngram  & math & 26.37\% & 0.4781 & \textbf{0.145} \\
        & code & 15.13\% & 0.1207 & \textbf{0.190} \\
\shared & math & 97.33\% & 1.6958 & 0.559 \\
        & code & 38.90\% & 0.3865 & 0.392 \\
\merged & math & 64.36\% & 1.0220 & 0.405 \\
        & code & 34.98\% & 0.3019 & 0.405 \\
\bottomrule
\end{tabular}
\end{table}

Using \cref{eq:telescope,eq:R} we recover the total cross-round inner product from archived pooled norms (\cref{tab:cancel}, \cref{fig:temporal}). In all arms $R_7<1$: successive task updates partly undo one another on the generic probes. The decisive point concerns \ngram. Its per-round changes are not small---$\sum_jE_j=0.48$ on math, about half of \merged's---but they cancel much more strongly ($R_7=0.145$). Its small endpoint drift is therefore not ``nothing changed in each round''; it is ``each round changed the generic representation and the next round largely moved it back.'' For long-running learning this is the property that matters: the table's effect on the base behaves like a bounded excursion rather than a random walk. The same back-and-forth acts on learned-task inputs, which is where its cost appears (\cref{sec:res-smallk}).

\begin{figure}[htbp]
\centering
\includegraphics[width=\linewidth]{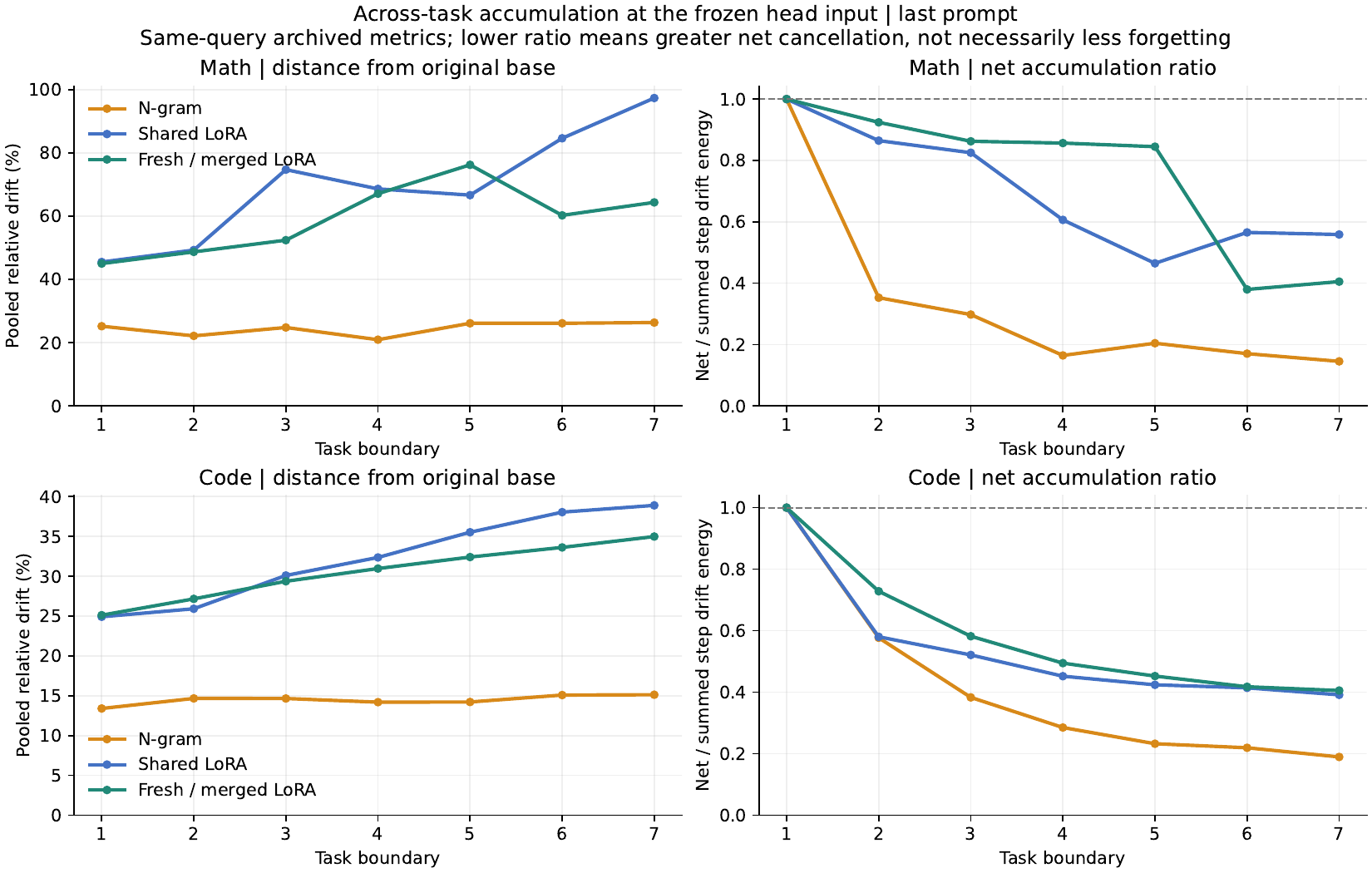}
\caption{Left: big-$K$ drift relative to $F_0$ over seven rounds. Right: cosine between round $j$'s change and the change accumulated before it, $(D_j-D_{j-1}-E_j)/(2\sqrt{D_{j-1}E_j})$.}
\label{fig:temporal}
\end{figure}

\subsection{Depth: residual streams and MLP/MoE blocks}
\label{sec:res-depth}

\begin{table}[htbp]
\centering
\small
\caption{Big-$K$ drift $F_0\to F_7$ at sampled layer outputs (four residual streams concatenated).}
\label{tab:layers}
\begin{tabular}{llrrrrrl}
\toprule
Method & Domain & L1 & L12 & L24 & L36 & L48 & Max layer \\
\midrule
\ngram  & math & 0.00\% & 4.25\% & 8.62\% & 34.52\% & 36.39\% & L48 (36.39\%) \\
        & code & 0.00\% & 2.50\% & 5.06\% & 14.63\% & 14.57\% & L43 (15.53\%) \\
\shared & math & 12.76\% & 39.50\% & 56.58\% & 139.45\% & 99.73\% & L36 (139.45\%) \\
        & code & 6.68\% & 18.45\% & 25.35\% & 41.55\% & 38.78\% & L34 (41.82\%) \\
\merged & math & 9.09\% & 19.02\% & 34.40\% & 84.60\% & 81.09\% & L36 (84.60\%) \\
        & code & 7.13\% & 14.76\% & 19.78\% & 35.31\% & 34.80\% & L42 (35.84\%) \\
\bottomrule
\end{tabular}
\end{table}

\Cref{tab:layers,fig:bigk-layers} show where in depth the arms separate. \ngram{} drift is exactly zero at layer 1, which precedes the memory read, and stays below 10\% through layer 25 even though the table is read at layer 2. It grows mainly in the last third of the network. Both LoRA arms change every layer directly, so drift is already 7--13\% at layer 1 and rises with depth. All arms peak between layers 34 and 48, and for both LoRA arms on math the drift falls after its peak (\shared{} 139\%$\to$100\% from L36 to L48). This is the behaviour described by \cref{eq:layer}: upstream drift is propagated, re-mixed and partly cancelled by later blocks, so drift norms need not increase monotonically with depth.

At the four final endpoints (base and the three $F_7$) we additionally captured the input and the aggregated output of every MLP/MoE block, before the hyper-connection injection, for 640 fixed queries (\cref{tab:mlp,fig:mlp}).

\begin{table}[htbp]
\centering
\small
\caption{Big-$K$ drift $F_0\to F_7$ at MLP/MoE blocks (\%). ``Median'' is the median over the 48 per-layer pooled drifts. Input and output are each normalised by their own base energy, so the two columns must not be subtracted.}
\label{tab:mlp}
\begin{tabular}{lll rrr}
\toprule
Domain & Hook & Method & L1 & L48 & Median \\
\midrule
math & input  & \ngram  & 0.05 & 22.21 & 10.59 \\
     &        & \shared & 14.19 & 88.73 & 72.91 \\
     &        & \merged & 8.34 & 58.05 & 44.41 \\
     & output & \ngram  & 0.09 & 74.50 & 19.72 \\
     &        & \shared & 17.93 & 123.51 & 112.13 \\
     &        & \merged & 11.48 & 151.05 & 69.87 \\
\midrule
code & input  & \ngram  & 0.09 & 14.50 & 6.15 \\
     &        & \shared & 11.43 & 37.55 & 29.76 \\
     &        & \merged & 7.59 & 34.61 & 23.88 \\
     & output & \ngram  & 0.12 & 12.19 & 11.31 \\
     &        & \shared & 7.86 & 33.91 & 43.52 \\
     &        & \merged & 8.44 & 30.45 & 36.14 \\
\bottomrule
\end{tabular}
\end{table}

The method ordering of \cref{tab:bigk} holds block by block, at both hooks. On math the median output drift exceeds the median input drift in every arm (\ngram{} 10.6\%$\to$19.7\%, \shared{} 72.9\%$\to$112.1\%, \merged{} 44.4\%$\to$69.9\%). For \ngram{} this is a pure propagation effect, since its MoE parameters are frozen and only the second term of \cref{eq:mlp-split} vanishes. At the last layer the \ngram{} MoE output drifts by 74.5\% on math although its input drifts by only 22.2\%. For the LoRA arms both terms of \cref{eq:mlp-split} are present.

\begin{figure}[htbp]
\centering
\includegraphics[width=0.9\linewidth]{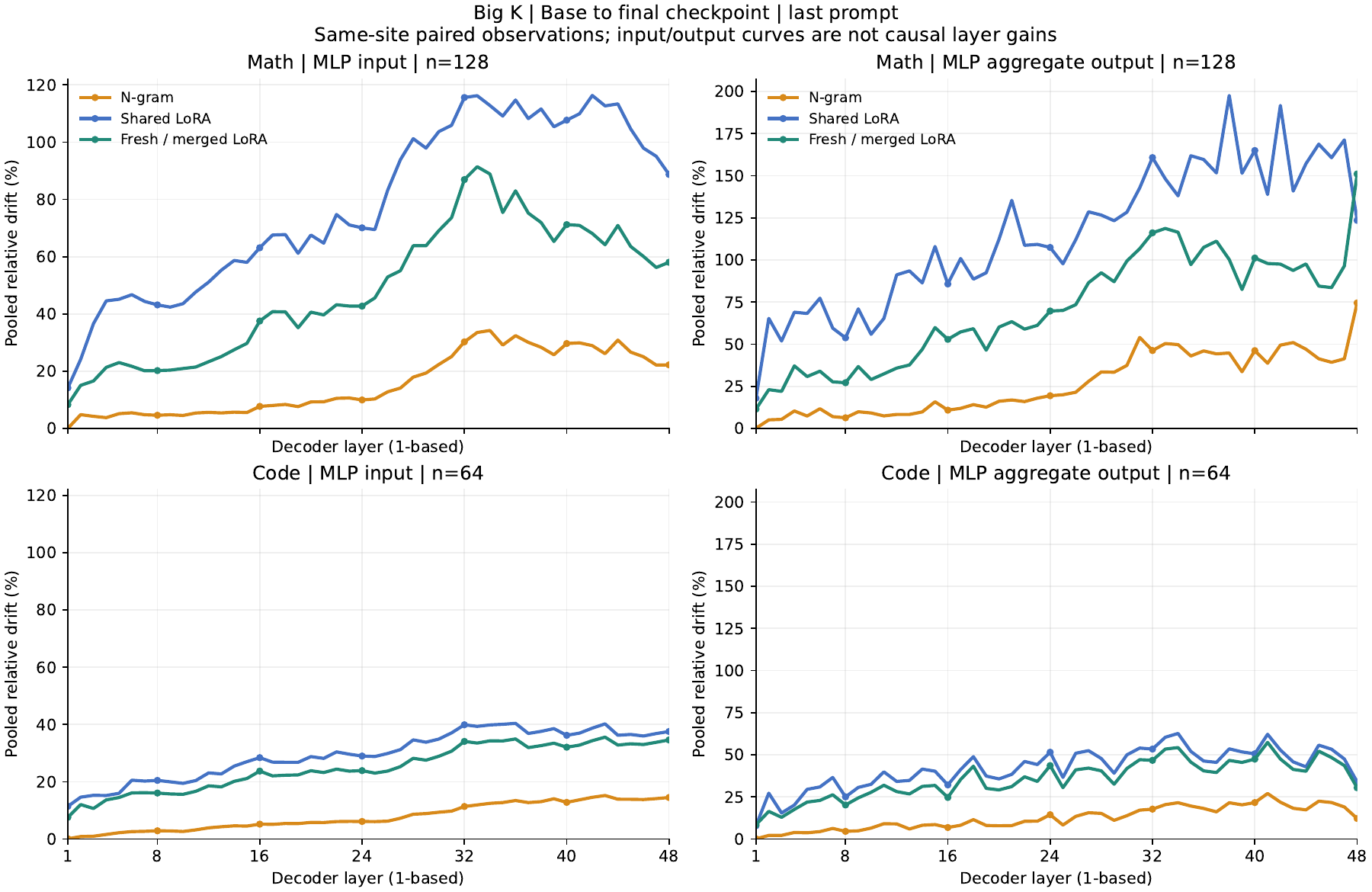}
\caption{Per-layer big-$K$ drift $F_0\to F_7$ at MLP/MoE block input and aggregated output.}
\label{fig:mlp}
\end{figure}

\subsection{Answer positions}
\label{sec:res-positions}

For math and code we also captured the positions at 25/50/75/100\% of the reference answer under teacher forcing. Head-input big-$K$ drift is position-dependent (\cref{tab:positions}): it is lower inside the answer than at the last prompt token for every arm (e.g.\ \shared{} math 97\% at the prompt vs 50--63\% inside the answer), and largest at the final answer position for both LoRA arms. The method ordering is unchanged at every position.

\begin{table}[htbp]
\centering
\small
\caption{Big-$K$ drift $F_0\to F_7$ at the head input by reference-answer position.}
\label{tab:positions}
\begin{tabular}{ll rrrr}
\toprule
Method & Domain & q25 & q50 & q75 & q100 \\
\midrule
\ngram  & math & 12.66\% & 13.66\% & 20.40\% & 15.00\% \\
        & code & 7.95\% & 10.65\% & 10.68\% & 17.95\% \\
\shared & math & 50.49\% & 53.08\% & 54.36\% & 62.74\% \\
        & code & 21.98\% & 26.21\% & 27.95\% & 53.03\% \\
\merged & math & 36.69\% & 40.50\% & 39.90\% & 46.41\% \\
        & code & 20.47\% & 23.21\% & 23.28\% & 43.42\% \\
\bottomrule
\end{tabular}
\end{table}

\subsection{Small $k_i$: from acquisition to the end of the chain}
\label{sec:res-smallk}

\begin{table}[htbp]
\centering
\scriptsize
\caption{Small-$k_i$ drift $F_i\to F_7$ on each learned task's 64 probes (head input, last prompt token), with the change in teacher-forced reference NLL over the same interval. AMH is learned last and has no subsequent training.}
\label{tab:smallk}
\begin{tabular}{ll rrrrrrr}
\toprule
Task & Method & Rel.\ RMS & Mean cos & Median & P95 & Norm ratio & Dir.\ share & $\Delta$NLL \\
\midrule
SME & \ngram  & 40.43\% & 0.9205 & 38.00\% & 56.79\% & 1.0099 & 98.18\% & $+0.236$ \\
    & \shared & 50.65\% & 0.8836 & 47.32\% & 67.03\% & 1.0703 & 96.83\% & $+0.356$ \\
    & \merged & \textbf{37.19\%} & 0.9313 & 36.03\% & 49.12\% & 0.9972 & 99.01\% & $\mathbf{+0.210}$ \\
NAV & \ngram  & 61.98\% & 0.8172 & 61.78\% & 77.22\% & 1.0465 & 99.07\% & $+0.753$ \\
    & \shared & 60.60\% & 0.8223 & 57.86\% & 78.65\% & 1.0262 & 99.11\% & $+1.018$ \\
    & \merged & \textbf{47.43\%} & 0.8881 & 46.34\% & 59.10\% & 1.0022 & 99.37\% & $\mathbf{+0.450}$ \\
HYE & \ngram  & 32.61\% & 0.9462 & 30.37\% & 44.01\% & 0.9577 & 97.54\% & $\mathbf{+0.082}$ \\
    & \shared & 28.37\% & 0.9599 & 25.48\% & 43.61\% & 0.9963 & 99.57\% & $+0.123$ \\
    & \merged & \textbf{25.84\%} & 0.9663 & 23.67\% & 34.99\% & 0.9585 & 96.80\% & $+0.100$ \\
AFB & \ngram  & 37.26\% & 0.9320 & 36.31\% & 51.61\% & 1.0155 & 99.29\% & $+0.353$ \\
    & \shared & 30.30\% & 0.9542 & 29.05\% & 39.09\% & 0.9957 & 99.14\% & $+0.163$ \\
    & \merged & \textbf{22.17\%} & 0.9753 & 21.66\% & 28.30\% & 0.9901 & 98.99\% & $\mathbf{+0.094}$ \\
SQI & \ngram  & 36.77\% & 0.9324 & 33.21\% & 57.49\% & 0.9982 & 99.61\% & $+0.215$ \\
    & \shared & 36.30\% & 0.9359 & 32.86\% & 52.75\% & 1.0236 & 98.98\% & $\mathbf{+0.080}$ \\
    & \merged & \textbf{30.31\%} & 0.9554 & 25.56\% & 52.39\% & 1.0115 & 99.52\% & $+0.088$ \\
MKD & \ngram  & 26.00\% & 0.9668 & 22.76\% & 39.94\% & 1.0092 & 99.51\% & $+0.192$ \\
    & \shared & 23.25\% & 0.9738 & 22.06\% & 30.11\% & 1.0242 & 98.46\% & $+0.057$ \\
    & \merged & \textbf{16.04\%} & 0.9874 & 15.45\% & 22.73\% & 1.0166 & 98.41\% & $\mathbf{+0.003}$ \\
\midrule
Mean & \ngram  & 39.18\% & & & & & & $+0.305$ \\
     & \shared & 38.25\% & & & & & & $+0.300$ \\
     & \merged & \textbf{29.83\%} & & & & & & $\mathbf{+0.157}$ \\
\bottomrule
\end{tabular}
\end{table}

\Cref{tab:smallk} measures what big $K$ cannot: how much the representations of \emph{learned} tasks move after they are learned. Three facts stand out.
\begin{enumerate}[leftmargin=*,itemsep=2pt]
  \item \textbf{The big-$K$ ordering does not transfer.} \ngram{} has the smallest big-$K$ drift but \emph{not} the smallest small-$k_i$ drift on any task; it is the largest of the three on five of six tasks (all but SME). Its mean small-$k_i$ drift (39.2\%) equals \shared's (38.3\%). Even with a frozen backbone, old-task representations move by 26--62\%.
  \item \textbf{\merged{} has the smallest small-$k_i$ drift on every task}, by 3--15 points.
  \item \textbf{Drift is again almost purely directional} (96.8--99.6\%), with norm ratios within 7\% of one.
\end{enumerate}
Older tasks drift more in every arm (SME and NAV have experienced six and five later tasks), but task age is not the only factor: HYE (third) drifts less than AFB (fourth) under \ngram.

\Cref{fig:layer-bigk-smallk} shows where in depth this happens. On generic inputs (a) the three arms are far apart at every layer, and \ngram{} stays below 10\% until layer 25. On learned-task inputs (b) the three curves nearly coincide, and \ngram{} drift appears immediately after the memory read: 19\% at layer 5 and 31\% at layer 9, against 4\% for generic inputs at the same layers. This is the single-source picture of \cref{prop:layers}: \ngram{} has one source term, the table injection, and it is small on generic inputs and large on old-task inputs, which read rows that later tasks rewrite. LoRA creates sources at every layer for both kinds of input. The two LoRA arms show the opposite asymmetry in deep layers. There they move generic inputs far more than learned-task inputs (\shared{} 139\% vs 54\% at layer 36).

\begin{figure}[htbp]
\centering
\includegraphics[width=\linewidth]{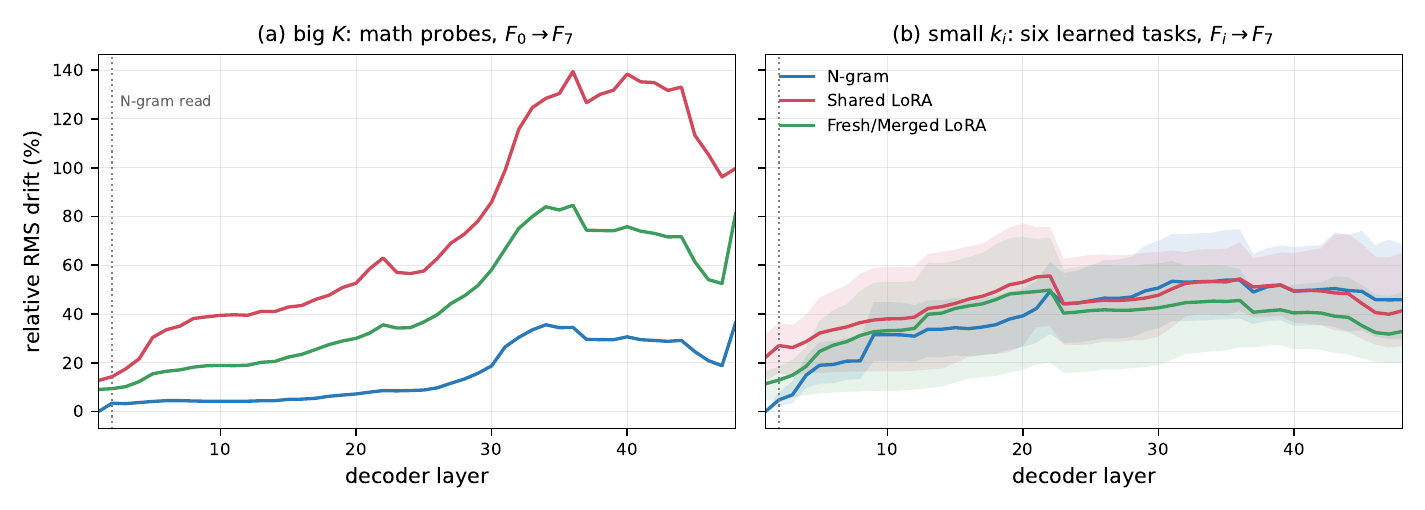}
\caption{Per-layer drift (four residual streams concatenated, last prompt token). (a) Big $K$ on math probes, $F_0\to F_7$. (b) Small $k_i$, mean over the six learned tasks with min--max band, $F_i\to F_7$. The dotted line marks the N-gram read at layer 2.}
\label{fig:layer-bigk-smallk}
\end{figure}

\begin{figure}[htbp]
\centering
\includegraphics[width=\linewidth]{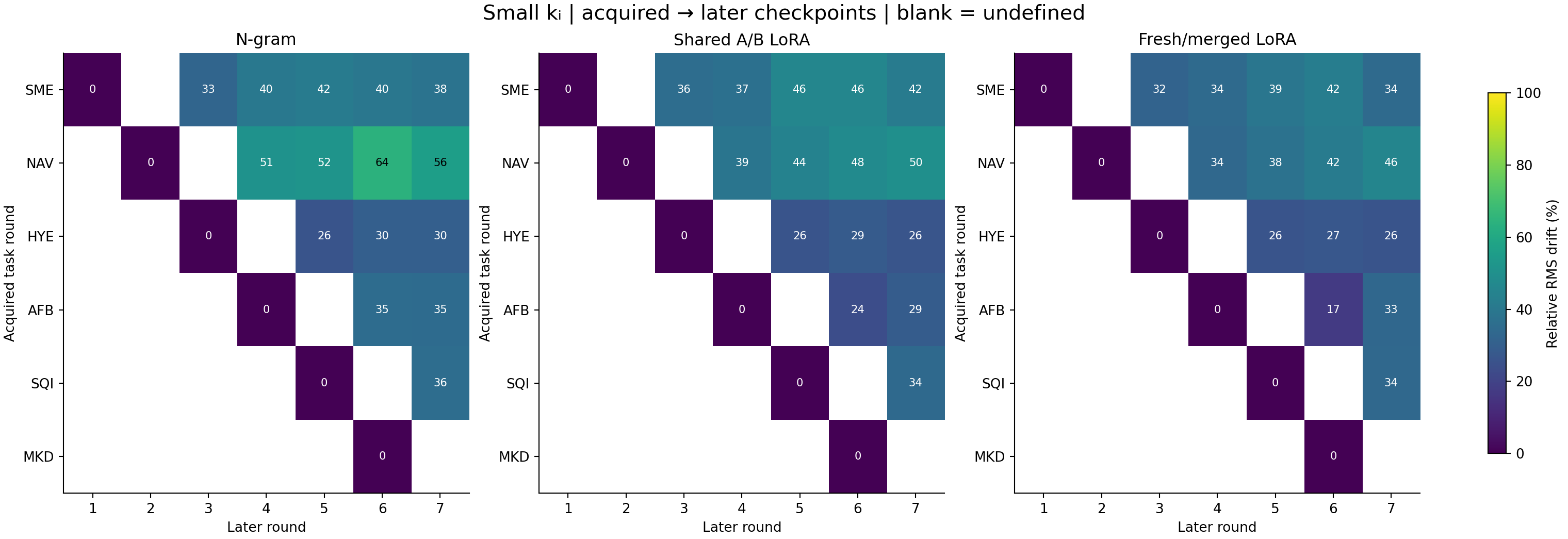}
\caption{Small-$k_i$ drift $F_i\to F_t$ for every learned task $i$ and later round $t$ (head input, last prompt token).}
\label{fig:smallk-triangle}
\end{figure}

\section{Behavioural Correlates and Boundary Checks}
\label{sec:behavior}

We now relate the drift measurements to behaviour: teacher-forced NLL, first-token margins, and greedy generation accuracy.

\subsection{Forgetting in NLL, and which drift tracks it}
\label{sec:res-dissociation}

\begin{table}[htbp]
\centering
\small
\caption{Final ($F_7$) reference NLL: generic probes relative to $F_0$, learned tasks relative to their acquisition checkpoint $F_i$ (mean over the six tasks with later training), next to the mean big-$K$ drift.}
\label{tab:forgetting}
\begin{tabular}{l rr rr r r}
\toprule
Method & Math NLL & $\Delta$ & Code NLL & $\Delta$ & Old-task mean $\Delta$NLL & Big-$K$ drift \\
\midrule
\ngram  & 0.8759 & $-0.0119$ & 0.1985 & $-0.0022$ & $+0.3052$ & \textbf{20.75\%} \\
\shared & 0.7877 & $-0.1001$ & 0.2809 & $+0.0802$ & $+0.2995$ & 68.12\% \\
\merged & 0.8036 & $-0.0865$ & 0.2244 & $+0.0246$ & $\mathbf{+0.1573}$ & 49.67\% \\
\bottomrule
\end{tabular}
\end{table}

\Cref{tab:forgetting} contains the paper's main dissociation. \ngram{} memory perturbs generic representations least and leaves generic NLL essentially unchanged. Yet on the tasks it has learned it forgets as much as \shared{} LoRA ($+0.305$ vs $+0.300$) and nearly twice as much as \merged{} LoRA ($+0.157$). The big-$K$ ordering $\ngram<\merged<\shared$ does not predict the forgetting ordering $\merged<\shared\approx\ngram$.

Small-$k_i$ drift does (\cref{fig:anchored}). Its means order the arms exactly as their mean $\Delta$NLL, and per task it tracks $\Delta$NLL across all 18 task--method pairs (Spearman $\rho=0.85$). This is what \cref{eq:dnll-bound} leads one to expect: forgetting of task $i$ is bounded by $D_i^t$, which is measured on task-$i$ inputs from $F_i$. Big-$K$ drift enters that bound only under the additional assumption that generic and task inputs move alike, and N-gram memory violates that assumption by construction (\cref{eq:ngram}).
The same dissociation appears on Llama-3-8B (\cref{fig:anchored}c,d; \cref{app:llama}). There the memory arm is not native: Llama-3-8B has no N-gram memory, so we added a lookup table initialised to zero, on which the base model's knowledge does not depend. Big-$K$ drift is again smallest for the memory arm (6.1\% vs 30.3\% merged, 50.5\% shared). It is also much smaller than for Qwen's native table (26\%/15\%), consistent with an empty side path perturbing the base less than editing rows that pretraining already uses. Its mean old-task $\Delta$NLL ($+0.69$) nevertheless exceeds \merged's ($+0.46$), and small-$k_i$ drift again tracks per-task $\Delta$NLL ($\rho=0.71$). The dissociation therefore holds whether the memory is native and pretrained or newly added and empty.
\begin{figure}[htbp]
\centering
\includegraphics[width=\linewidth]{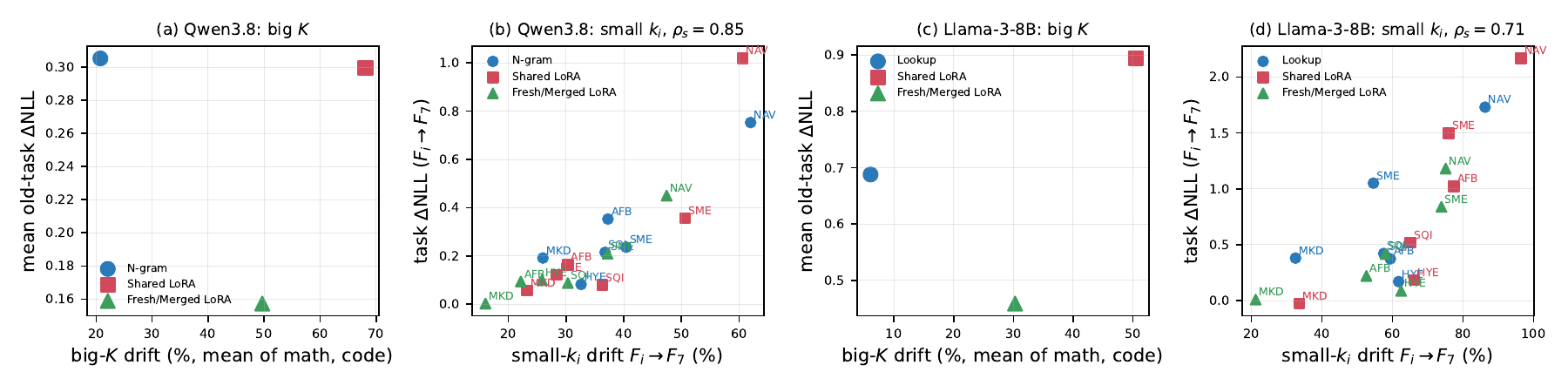}
\caption{Big-$K$ drift does not track forgetting; small-$k_i$ drift does. (a,c) Mean old-task $\Delta$NLL against big-$K$ drift: the memory arm has the smallest generic drift but not the least forgetting. (b,d) Per-task $\Delta$NLL against small-$k_i$ drift $F_i\to F_7$ for all 18 task--method pairs. (a,b) Qwen3.8 with native N-gram memory; (c,d) Llama-3-8B with an added zero-initialised lookup table.}
\label{fig:anchored}
\end{figure}

\subsection{First-token margins}
\label{sec:res-margin}

From the head inputs at $F_0$ and $F_7$ and the frozen head we computed, for every probe, the full-vocabulary margin of the first reference token, $m=z_y-\max_{c\neq y}z_c$, and its paired change $\Delta m=m_{\mathrm{final}}-m_{\mathrm{base}}$: 1{,}920 paired observations over 9 domains and 3 arms (\cref{tab:margin}, \cref{fig:margin}). On the seven tasks all 21 domain--method medians are positive, because this $F_0\to F_7$ comparison mixes learning with forgetting. The LoRA arms gain about twice the first-token margin of \ngram, and reference-token argmax rises from 69.6\% to 78.3\% (\ngram), 85.0\% (\shared) and 87.3\% (\merged). On math, where the first reference token is not a training target, both LoRA arms lower the margin while \ngram{} leaves it unchanged. Of the 10 math positions where the reference token strictly led at $F_0$, 4 (\ngram), 9 (\shared) and 10 (\merged) are overtaken at $F_7$. None of the 60 strictly leading code positions is lost in any arm.

\begin{table}[htbp]
\centering
\small
\caption{Median paired change of first-reference-token margin $F_0\to F_7$ (logits).}
\label{tab:margin}
\begin{tabular}{l rrr}
\toprule
Domain ($n$) & \ngram & \shared & \merged \\
\midrule
Math (128) & $+0.036$ & $-4.350$ & $-3.103$ \\
Code (64) & $+0.011$ & $-0.038$ & $+0.345$ \\
Seven tasks (448) & $+2.172$ & $+4.067$ & $+4.412$ \\
\bottomrule
\end{tabular}
\end{table}

\begin{figure}[htbp]
\centering
\includegraphics[width=0.8\linewidth]{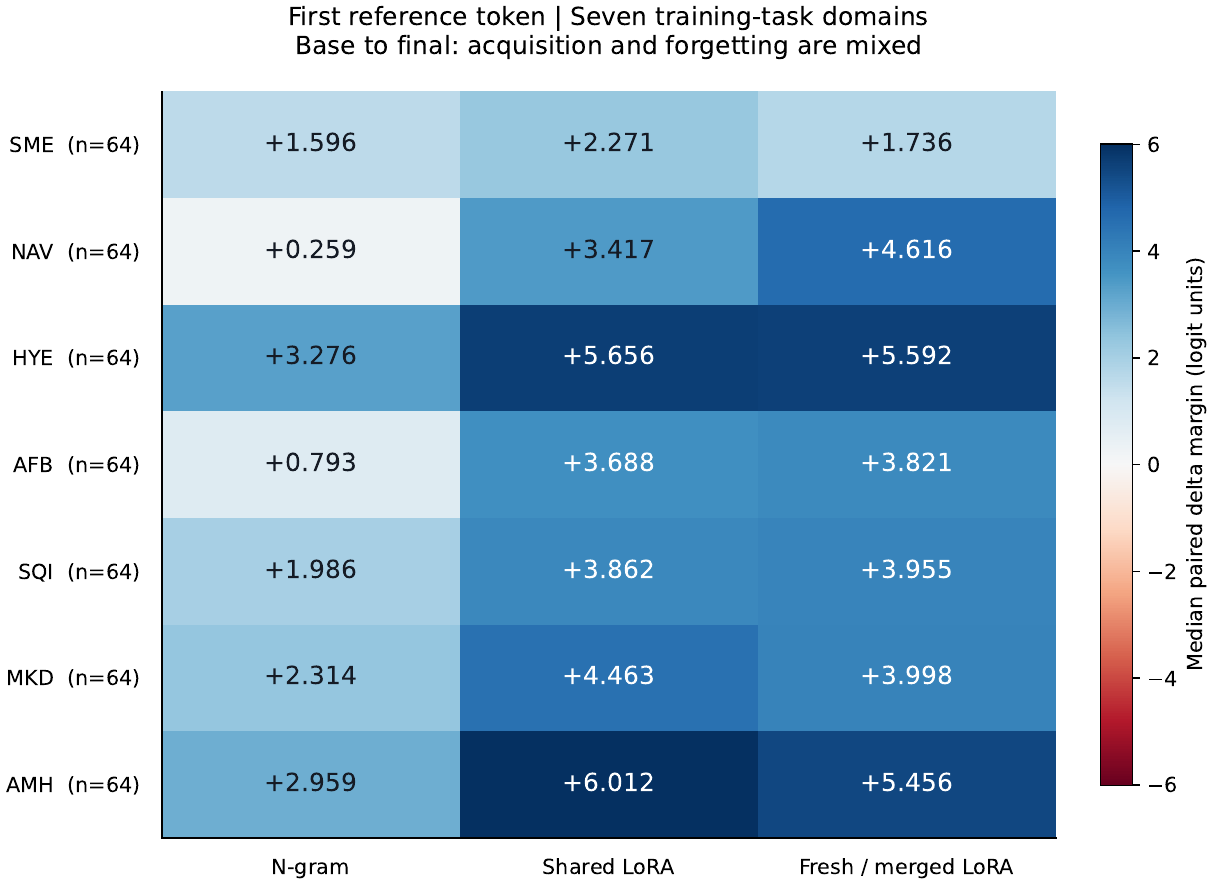}
\caption{Median first-reference-token margin change $F_0\to F_7$ by domain and method.}
\label{fig:margin}
\end{figure}

The pattern matches the big-$K$ picture: on code, which the table updates barely touch, N-gram memory leaves the base's decision margins where they were.

\subsection{Generative accuracy}
\label{sec:res-accuracy}

All endpoints were exported to Hugging Face format and decoded greedily with vLLM; every sample was scored ($\texttt{unresolved}=0$).

\begin{table}[htbp]
\centering
\small
\caption{Greedy exact match on the 64 probes of each task, from acquisition ($F_i$) to $F_7$ (percentage-point change).}
\label{tab:acc}
\begin{tabular}{l ccc}
\toprule
Task & \ngram & \shared & \merged \\
\midrule
SME & 13 $\to$ 13 \;(0.0) & 32 $\to$ 22 \;($-15.6$) & 31 $\to$ 27 \;($-6.3$) \\
NAV & \phantom{0}6 $\to$ \phantom{0}4 \;($-3.1$) & 23 $\to$ \phantom{0}8 \;($-23.4$) & 23 $\to$ 11 \;($-18.8$) \\
HYE & 61 $\to$ 55 \;($-9.4$) & 63 $\to$ 55 \;($-12.5$) & 64 $\to$ 57 \;($-10.9$) \\
AFB & 42 $\to$ 35 \;($-10.9$) & 52 $\to$ 49 \;($-4.7$) & 51 $\to$ 49 \;($-3.1$) \\
SQI & 50 $\to$ 45 \;($-7.8$) & 53 $\to$ 46 \;($-10.9$) & 51 $\to$ 48 \;($-4.7$) \\
MKD & 54 $\to$ 46 \;($-12.5$) & 59 $\to$ 57 \;($-3.1$) & 56 $\to$ 57 \;($+1.6$) \\
AMH & 57 & 58 & 60 \\
\midrule
Mean change (6 tasks) & $-7.3$ & $-11.7$ & $\mathbf{-7.0}$ \\
Cumulative accuracy at $F_7$ & 56.9\% & 65.8\% & \textbf{69.0\%} \\
\bottomrule
\end{tabular}
\end{table}

Task accuracy (\cref{tab:acc}) agrees with NLL and small-$k_i$ drift on the two LoRA arms: \shared{} loses more than \merged{} ($-11.7$ vs $-7.0$\,pp on average), with the earliest tasks hit hardest. For \ngram{} it exposes what drift and NLL do not. \ngram{} acquires the hard languages much more weakly (SME 13/64 vs 31--32; NAV 6/64 vs 23), so its small exact-match decline ($-7.3$\,pp) is bounded by a floor. In the language of \cref{prop:margin}, many \ngram{} probes never had a positive sequence margin to lose. Low drift combined with weak acquisition is not retention. Cumulative accuracy after round 7 is 56.9\% (\ngram), 65.8\% (\shared) and 69.0\% (\merged). Its round-3 jump in every arm comes from adding HYE, which all arms learn almost perfectly, so cumulative curves should not be read as forgetting curves.

\begin{table}[htbp]
\centering
\small
\caption{Generic capabilities at $F_7$: greedy accuracy on 128 MATH-500 problems and 64 HumanEval problems (pass@1, sandboxed WASI execution), with paired per-problem transitions relative to the base, and the number of math outputs whose final answer could not be parsed (counted as wrong).}
\label{tab:mathcode}
\begin{tabular}{l rrr rr}
\toprule
 & Math acc. & Lost / gained & Unparsed & Code acc. & Lost / gained \\
\midrule
Base & 71/128 (55.5\%) & -- & 56 & 51/64 (79.7\%) & -- \\
\ngram  & 68/128 (53.1\%) & 10 / 7 & 59 & 55/64 (85.9\%) & 0 / 4 \\
\shared & 32/128 (25.0\%) & 47 / 8 & 96 & 52/64 (81.3\%) & 7 / 8 \\
\merged & 94/128 (73.4\%) & 1 / 24 & 33 & 55/64 (85.9\%) & 6 / 10 \\
\bottomrule
\end{tabular}
\end{table}

Generic generation (\cref{tab:mathcode}) shows that changes are not uniformly degradations. Most of the math swings coincide with the number of unparsable final answers (base 56, \shared{} 96, \merged{} 33). They therefore reflect changes in answer format and termination behaviour at least as much as changes in mathematical competence. \shared{} lowers math reference NLL ($-0.100$) while losing 47 problems: teacher-forced NLL scores one fixed reference solution, whereas generation follows the model's own trajectory and an early format change propagates. This is exactly the gap between \cref{eq:ce-bound} and \cref{eq:argmax-cert}. \ngram{} changes generic behaviour least on math (10 lost / 7 gained), consistent with its small big-$K$ drift.

\section{Discussion: Write Online Knowledge into the Memory}
\label{sec:discussion}

\paragraph{The base is the binding constraint.}
An online learner receives an unbounded sequence of updates. Any channel whose perturbation of the base grows with the number of updates will eventually erode the capabilities the model was deployed for, however well it learns each update. The three channels differ sharply in this respect. Shared LoRA's generic drift grows almost monotonically (45\%$\to$97\% on math). Merged LoRA's also grows, non-monotonically, to 64\% with a peak of 76\%. N-gram memory reaches 25\% after the first task and stays there (\cref{tab:rounds}), because its per-round changes cancel ($R_7=0.15$, \cref{tab:cancel}). In generic behaviour it loses no code problem and changes math NLL by $-0.01$. Among the three, only N-gram memory has a base perturbation that behaves like a bounded excursion rather than an accumulating walk.

\paragraph{Why the memory protects the base.}
\Cref{eq:ngram} makes this structural rather than empirical. The table is read through addresses fixed by raw tokens, and everything upstream of the read is frozen. An input none of whose prefix rows were written is therefore exactly unchanged, at every layer and in every logit. General math and code text rarely addresses the rows a new task writes, so generic drift stays small and does not accumulate. In addition, every change lives in an explicit set of rows: 8--24M values per task, 0.21\% of the table for all seven tasks. These rows can be inspected, versioned and rolled back without touching the 125B-parameter backbone. Neither LoRA variant offers this: their updates act on every input through every adapted projection.

\paragraph{Where the cost appears, and why it belongs to the memory.}
N-gram memory pays in two places. It acquires difficult tasks more weakly (SME 13/64 vs 31--32 for LoRA), and its stored tasks interfere: old-task NLL rises as much as with shared LoRA, and small-$k_i$ drift is as large. \Cref{eq:ngram} also locates this cost. Old-task inputs share rows with later tasks: 1{,}277--4{,}911 rows per task by count, and in particular the template and chat-format rows that every example reads, including at the last prompt token. Later tasks rewrite those rows, and the layer profile shows it: on old-task inputs \ngram{} drift reaches 31\% within seven layers of the read, while generic inputs at the same depth move by 4\% (\cref{fig:layer-bigk-smallk}). Both weaknesses are thus properties of how the memory is addressed and written, not of the backbone. They can be attacked inside the memory: by keeping shared template rows fixed or task-specific, by allotting more rows or a larger learning rate to new content, or by per-row update rules. None of these requires editing the backbone again.

\paragraph{What merged LoRA shows.}
Merged LoRA retains learned tasks best and acquires them as well as shared LoRA. It also improves code generation, and its large math accuracy gain coincides with fewer unparsable answers. Its old factors are never edited, and its cumulative rank grows as $tr$, which avoids the overwrite terms of \cref{eq:shared-dW} and the capacity bound of \cref{eq:capacity}. These are good properties for an offline consolidation step, in which knowledge that has proved stable is periodically folded into the weights with full regression testing. As an always-on online channel it moves the base by 50\% on generic inputs after seven tasks, and that drift has no mechanism that bounds it.

\paragraph{A design for long-running online learning.}
Our measurements support a two-speed design. Online updates are written only to the native memory table, with the backbone and head frozen, so the base stays fixed and every change is local and reversible. Memory-side mechanisms then manage interference between stored items. Weight-level consolidation, if used at all, is an occasional offline step. Small-$k_i$ drift and acquisition margins, rather than generic drift alone, are the quantities to monitor, because forgetting follows them (\cref{fig:anchored}).

\paragraph{Limitations.}
\label{sec:limitations}
Results come from one task order and one seed on Qwen3.8, with Llama-3-8B as a second, differently configured model. The arms are matched in data and optimisation, not in parameter count. Margins are measured at the first reference token from the base, not from each acquisition checkpoint. Layer- and block-level results are observational.

\section{Conclusion}
\label{sec:conclusion}

We measured, layer by layer and round by round, how three ways of writing new knowledge change a 48-layer MoE language model over a seven-task continual chain. Updating only the native N-gram memory changes the base model's representations two to four times less than LoRA. That change does not grow with the number of tasks, because successive updates largely cancel, and it leaves generic behaviour essentially intact. Its cost, weaker acquisition and interference among stored tasks, shows up on the learned tasks' own inputs. There forgetting follows small-$k_i$ drift, and exact margin relations explain why. For a model that must keep learning online, these results point to a clear division of labour: write new knowledge only into the memory, keep the backbone fixed, and solve interference inside the memory rather than by editing the backbone.

\section*{Acknowledgments}
The author used AI assistants for drafting text, writing code for analysis and figures, and checking the literature. All experiments, results and claims were verified by the author, who takes full responsibility for the content.

\bibliographystyle{plainnat}
\bibliography{refs}

\appendix
\section{Llama-3-8B Replication}
\label{app:llama}

The Llama-3-8B chain (\cref{sec:setup}) differs from the Qwen chain in one essential respect: its memory arm is a lookup table added to a model that has none, initialised to zero, so the base model's knowledge does not depend on it and big-$K$ drift measures only the perturbation caused by a new side path. All captures used SDPA attention. Representations are taken after the final RMSNorm (the LM-head input), at the last prompt token, for 64 math and 32 code probes and 64 probes per task. All NLL changes are computed against the same-backend base or acquisition checkpoint. Absolute values are not comparable with Qwen; only within-model orderings are.

\begin{table}[htbp]
\centering
\small
\caption{Llama-3-8B big-$K$ drift $F_0\to F_7$ and reference NLL change.}
\label{tab:llama-bigk}
\begin{tabular}{l rrr rrr}
\toprule
 & \multicolumn{3}{c}{Math} & \multicolumn{3}{c}{Code} \\
\cmidrule(lr){2-4}\cmidrule(lr){5-7}
Method & Drift & Mean cos & $\Delta$NLL & Drift & Mean cos & $\Delta$NLL \\
\midrule
Lookup  & \textbf{4.86\%} & 0.9988 & $+0.0024$ & \textbf{7.29\%} & 0.9973 & $-0.0002$ \\
\shared & 69.10\% & 0.7661 & $+0.0712$ & 31.87\% & 0.9496 & $+0.0590$ \\
\merged$^\dagger$ & 39.04\% & 0.9245 & $+0.0266$ & 21.51\% & 0.9771 & $+0.0217$ \\
\bottomrule
\end{tabular}
\end{table}

\begin{table}[htbp]
\centering
\small
\caption{Llama-3-8B small-$k_i$ drift $F_i\to F_7$ and task reference $\Delta$NLL.}
\label{tab:llama-smallk}
\begin{tabular}{l rrr rrr}
\toprule
 & \multicolumn{3}{c}{Small-$k_i$ drift} & \multicolumn{3}{c}{$\Delta$NLL} \\
\cmidrule(lr){2-4}\cmidrule(lr){5-7}
Task & Lookup & \shared & \merged$^\dagger$ & Lookup & \shared & \merged$^\dagger$ \\
\midrule
SME & 54.58\% & 75.91\% & 73.86\% & $+1.051$ & $+1.496$ & $+0.838$ \\
NAV & 86.25\% & 96.38\% & 75.07\% & $+1.730$ & $+2.166$ & $+1.181$ \\
HYE & 61.74\% & 66.16\% & 62.45\% & $+0.170$ & $+0.186$ & $+0.087$ \\
AFB & 59.38\% & 77.42\% & 52.60\% & $+0.374$ & $+1.023$ & $+0.221$ \\
SQI & 57.52\% & 65.08\% & 57.99\% & $+0.422$ & $+0.518$ & $+0.417$ \\
MKD & 32.50\% & 33.50\% & 21.19\% & $+0.381$ & $-0.026$ & $+0.009$ \\
\midrule
Mean & 58.66\% & 69.08\% & 57.19\% & $+0.688$ & $+0.894$ & $+0.459$ \\
\bottomrule
\multicolumn{7}{l}{\footnotesize $^\dagger$Trained on the full training split; Lookup and \shared{} used a fixed 1{,}024-example subset per task.}
\end{tabular}
\end{table}

The direction of the Qwen findings replicates. The memory arm has by far the smallest big-$K$ drift but not the least forgetting. \merged{} forgets least in NLL and has the smallest mean small-$k_i$ drift, with Lookup essentially tied in drift. Small-$k_i$ drift tracks per-task $\Delta$NLL ($\rho=0.71$). Final seven-task accuracy is 34.2\% (Lookup), 33.9\% (\shared) and 49.8\% (\merged). Lookup's mean accuracy at acquisition (54.7\%) is again lower than \shared's (60.3\%), the same acquisition gap as on Qwen.

\section{Full Seven-Round Big-$K$ Trajectory}
\label{app:rounds}

\begin{table}[htbp]
\centering
\small
\caption{Qwen3.8 big-$K$ drift from $F_0$, mean cosine and reference NLL at every task boundary (head input, last prompt token). Base NLL: math 0.8878, code 0.2007.}
\begin{tabular}{l r rrr rrr}
\toprule
 & & \multicolumn{3}{c}{Math} & \multicolumn{3}{c}{Code} \\
\cmidrule(lr){3-5}\cmidrule(lr){6-8}
Method & Round & Drift & Cos & NLL & Drift & Cos & NLL \\
\midrule
\ngram & 1 & 25.21\% & 0.9682 & 0.8811 & 13.40\% & 0.9914 & 0.1995 \\
 & 2 & 22.16\% & 0.9750 & 0.8767 & 14.67\% & 0.9896 & 0.1990 \\
 & 3 & 24.81\% & 0.9695 & 0.8780 & 14.66\% & 0.9897 & 0.1987 \\
 & 4 & 20.95\% & 0.9778 & 0.8774 & 14.19\% & 0.9903 & 0.2003 \\
 & 5 & 26.14\% & 0.9664 & 0.8741 & 14.22\% & 0.9901 & 0.1994 \\
 & 6 & 26.15\% & 0.9657 & 0.8757 & 15.09\% & 0.9891 & 0.1988 \\
 & 7 & 26.37\% & 0.9658 & 0.8759 & 15.13\% & 0.9889 & 0.1985 \\
\midrule
\shared & 1 & 45.51\% & 0.8930 & 0.7504 & 24.91\% & 0.9692 & 0.2100 \\
 & 2 & 49.28\% & 0.8735 & 0.7410 & 25.92\% & 0.9671 & 0.2139 \\
 & 3 & 74.71\% & 0.7165 & 0.7685 & 30.10\% & 0.9553 & 0.2170 \\
 & 4 & 68.62\% & 0.7571 & 0.7472 & 32.36\% & 0.9487 & 0.2026 \\
 & 5 & 66.64\% & 0.7602 & 0.6192 & 35.53\% & 0.9372 & 0.2052 \\
 & 6 & 84.61\% & 0.6045 & 0.7725 & 38.05\% & 0.9285 & 0.2346 \\
 & 7 & 97.33\% & 0.4502 & 0.7877 & 38.90\% & 0.9263 & 0.2809 \\
\midrule
\merged & 1 & 45.01\% & 0.8973 & 0.7811 & 25.12\% & 0.9689 & 0.2088 \\
 & 2 & 48.70\% & 0.8775 & 0.7426 & 27.16\% & 0.9636 & 0.2057 \\
 & 3 & 52.41\% & 0.8559 & 0.7956 & 29.37\% & 0.9575 & 0.2151 \\
 & 4 & 67.10\% & 0.7752 & 0.8744 & 30.97\% & 0.9521 & 0.2166 \\
 & 5 & 76.22\% & 0.7144 & 0.9159 & 32.41\% & 0.9477 & 0.2237 \\
 & 6 & 60.25\% & 0.8137 & 0.8525 & 33.62\% & 0.9435 & 0.2237 \\
 & 7 & 64.36\% & 0.7894 & 0.8036 & 34.98\% & 0.9387 & 0.2244 \\
\bottomrule
\end{tabular}
\end{table}

\section{Reproducibility}
\label{app:repro}
Checkpoints are verified by SHA-256 before they are used as the parent of the next task. Probe sets, prompts and tokenisation protocols are fingerprinted, and repeated capture of the same input gives zero drift.

\end{document}